\icmltitlerunning{Input-agnostic Certified Group Fairness via Gaussian Parameter Smoothing}
\newtheorem{definition}{Definition}
\newtheorem{theorem}{Theorem}
\newtheorem{lemma}{Lemma}
\newcommand\notsotiny{\@setfontsize\notsotiny{8.7}{10}}
\newcommand{\method}{FairSmooth}
\newcommand{\methodI}{FairSmooth-S}
\newcommand{\methodII}{FairSmooth-D}
\begin{document}

\twocolumn[
\icmltitle{Input-agnostic Certified Group Fairness via Gaussian Parameter Smoothing}



\icmlsetsymbol{equal}{*}

\begin{icmlauthorlist}
\icmlauthor{Jiayin Jin}{AU}
\icmlauthor{Zeru Zhang}{AU}
\icmlauthor{Yang Zhou}{AU}
\icmlauthor{Lingfei Wu}{JD}
\end{icmlauthorlist}

\icmlaffiliation{AU}{Auburn University, USA}
\icmlaffiliation{JD}{JD.COM Silicon Valley Research Center, USA}

\icmlcorrespondingauthor{Yang Zhou}{yangzhou@auburn.edu}
\icmlcorrespondingauthor{Lingfei Wu}{lwu@email.wm.edu}

\icmlkeywords{Machine Learning, ICML}

\vskip 0.3in
]



\printAffiliationsAndNotice{}  

\begin{abstract}
Only recently, researchers attempt to provide classification algorithms with provable group fairness guarantees.
Most of these algorithms suffer from harassment caused by the requirement that the training and deployment data follow the same distribution.
This paper proposes an input-agnostic certified group fairness algorithm, {\sc \method}, for improving the fairness of classification models while maintaining the remarkable prediction accuracy. A Gaussian parameter smoothing method is developed to transform base classifiers into their smooth versions. 
An optimal individual smooth classifier is learnt for each group with only the data regarding the group and an overall smooth classifier for all groups is generated by averaging the parameters of all the individual smooth ones. By leveraging the theory of nonlinear functional analysis, the smooth classifiers are reformulated as output functions of a Nemytskii operator. Theoretical analysis is conducted to derive that the Nemytskii operator is smooth and induces a Fr\'echet differentiable smooth manifold. 
We theoretically demonstrate that the smooth manifold has a global Lipschitz constant that is independent of the domain of the input data, which derives the input-agnostic certified group fairness.
\end{abstract}

\vspace{-0.15cm}
\section{Introduction}\label{sec.introduction}
\vspace{-0.1cm}
Machine learning algorithms that are deployed in human-facing applications (e.g., loan granting~\cite{AJC88}, criminal recidivism~\cite{Nort12}, job screening~\cite{NYT15}, and predictive policing~\cite{AAAS16}) are often faced with unfair prediction results by discriminating against particular groups. 
Developing fair classification algorithms with respect to sensitive attributes has become an important problem due to the growing importance of addressing social biases in machine learning~\cite{CHKV19}.
A fair classification model aims to build a classifier that not only makes accurate predictions but also prevents itself from acting against specific groups in a discriminatory way, i.e., the classifier should have similar performance for all groups~\cite{IKL21}.

To minimize discrimination effects caused by the classification models, a large number of fair classification techniques have been proposed to utilize various notions of group fairness to develop tailored classification algorithms that predict in the same way across different demographic groups~\cite{ZVGG17a,ZVGG17b,GYF18,KXPK18,MeWi18,MDJW20,WGNC20,YCK20,ZTLL20,LBCL20,PST21,CHKV21,GDL21,ChMr21,RLWS21,SGJ21,QPLH21,LiWa21,MMYS21,CMV21,DMWT21,RLWS21,Anon22a,ZLL22}.
Traditional group fairness techniques often follow manually-crafted heuristics to generate fair classifiers by solving non-convex optimization problems~\cite{ZVGG17a,ZVGG17b,KXPK18}. Such heuristic strategies plus non-convex optimization are hard to provide provable guarantees for group fairness certification.

Only recently, few studies have sought to 
provide provable group fairness guarantees for 
classification algorithms~\cite{FFMS15,MOW17,CHKV19,ULP19,GFDS21,KJWC21,SAPB21,CHKV21,IKL21,Anon22b}. Despite achieving remarkable success, 
most of the above fairness guarantees rely on the assumption that the data the model being trained with and the data encountered after deployment follow the same distribution. However, this assumption is false for many real-world problems~\cite{ZQDX21}. It is demonstrated that these models often violate the fairness guarantees and exhibit unfair bias when evaluated on data from a different distribution~\cite{Anon22b}. 
Consider an example of job screening, applicants' demographics can shift over time, which results in the change of the distribution of applicants between training and deployment datasets. As a result, the derived fairness guarantees based on the training data will not hold on the deployment data in practice, if the learning algorithms assume that the training and deployment distributions are the same.
Shifty is the first classification strategy that provides high-confidence guarantees based on user-specified fairness constraints under demographic shift between training and deployment~\cite{Anon22b}. However, Shifty needs to know the old and new demographic proportions regarding the demographic shift at training, or the demographic proportions are bounded in known intervals in Shifty, which limits the applicability of Shifty in real-world scenarios.

To our best knowledge, this work is the first to certify the group fairness of classifiers with theoretical input-agnostic guarantees, while there is no need to know the shift between training and deployment datasets with respect to sensitive attributes, by leveraging the theory of nonlinear functional analysis, including Nemytskii operator and Fr\'echet differentiable smooth manifolds.

Randomized smoothing has achieved the state-of-the-art certified robustness guarantees against worst-case attacks by smoothing with isotropic Gaussian distribution~\cite{LAGH19,CRK19,LYCJ19,LCWC19,SLRZ19,LeFe20,ZDHZ20,FWCZ20,DHBK20,Haye20}.  
One advantage of the randomized smoothing techniques is that they are agnostic to input data as well as network architectures~\cite{CRK19,MKWC20,BDMZ20,CLYL21}.
BeFair presents a new notion of group fairness: best-effort fairness, in which the performance of an overall fair classifier $f(x)$ for all groups on each group $G_k$ $(k = 1,2,\cdots,K)$ should be close to the optimal individual classifier $f_k(x)$ for each $G_k$ ($f_k$ is trained with only the data $x \in \mathbb{R}^{n}$ regarding $G_k$)~\cite{KJWC21}.

This motivates us to propose an $\epsilon$-fairness metric that measures the gap between the predictions achieved by the overall and individual classifiers and to develop a Gaussian parameter smoothing framework for the certification of the input-agnostic group fairness. 

First, a Gaussian parameter smoothing method is proposed to transform the base individual and overall classifiers $f(x;W_k)$ and $f(x;W)$ into their smooth versions below.

\vspace{-0.35cm}
{\small
\begin{equation} \label{eq:Nemytskii1}
\begin{split}
&\hat{f}(x;W_k) = \underset{\Delta}{\mathbb{E}}\big(f(x;W_k+\Delta)\big),\;\text{and}\\
&\hat{f}(x;W) = \underset{\Delta }{\mathbb{E}}\big(f(x;W+\Delta)\big),\ \Delta \sim \mathcal{N}(0,\sigma^2 I)
\end{split}
\end{equation}
}
\vspace{-0.3cm}

where $\mathcal{N}(0,\sigma^2 I)$ is the isotropic Gaussian distribution. 
$W_k$ and $W$ are the parameters of $f(x;W_k)$ and $f(x;W)$. 

Second, we train an optimal individual smooth classifier $\hat f(x;W^\ast_k)$ for each group $G_k$ with only the data regarding $G_k$, where $W^\ast_k$ is the parameter of $\hat f(x;W^\ast_k)$. An overall classifier $\hat f(x;W^\ast)$ for all groups is generated by averaging the parameters of all the individual classifiers, i.e., $W^\ast = \frac{W^\ast_1+\cdots+W^\ast_K}{K}$. Ideally, when the parameters $W^\ast_k$ of $\hat f(x;W^\ast_k)$ are close to each other, then the overall classifier $\hat f(x;W^\ast)$ with the parameter $W^\ast$ is relatively fair to all groups. 

Third, by leveraging the theory of nonlinear functional analysis, the smooth classifiers $\hat{f}(x;W)$ and $\hat{f}(x;W_k)$ are reformulated as as output functions of a Nemytskii operator $\hat{N}(W)(\cdot)$, which maps the parameter space to a function space. 
The theoretical analysis is conducted to derive that the Nemytskii operator $\hat{N}(W)(\cdot)$ is smooth and induces a Fr\'echet differentiable smooth manifold $\hat N(\mathbb{R}^m)$. 
We further prove that the smooth manifold has a global Lipschitz constant $\frac{1}{\sqrt{2\pi}\sigma}$, i.e. $\|\hat{N}(W_1)(x) - \hat{N}(W_2)(x)\|\le \frac{\|W_1-W_2\|_2}{\sqrt{2\pi}\sigma}$ for any $W_1,W_2\in\mathbb{R}^m$.  Moreover, the global Lipschitz constant $\frac{1}{\sqrt{2\pi}\sigma}$ is independent of the structure of classifiers (e.g., neural networks) and the domain of the input data. This global Lipschitz property of $\hat{N}(W)$ allows to measure the difference between two smoothed classifiers $\hat f(x;W_1)$ and $\hat f(x;W_2)$ with the distance $\|W_1 - W_2\|_2$ between their parameters. This is the reason why we perform the Gaussian parameter smoothing on classifiers. On the other hand, the base classifiers $f(x;W_k)$ and $f(x;W)$ are only locally Lipschitz regarding $W_k$ and $W$ respectively. Thus, the Lipschitz constants depend on the input data $x$.

Last but not least, with the global Lipschitz property of $\hat{N}(W)$, our training aims to minimize the training loss of each individual smooth classifier $\hat{N}(W_k)(x)$ for group $G_k$ as well as the distance among the parameters of individual smooth classifiers, where a smaller distance indicates a better group fairness. We demonstrate the input-agnostic certified group fairness, i.e., the overall smooth classifier $\hat{N}(W^\ast)(x)$ has certified $\frac{(K-1)d}{\sqrt{2\pi}K\sigma}$-fairness, where $d = \underset{1\le i,j\le K}{\max}\|W^\ast_i - W^\ast_j\|$. Namely, for $k = 1,\cdots ,K$, it holds  $|\hat{N}(W^\ast)(x) - \hat{N}(W_k^\ast)(x)|\le \frac{(K-1)d}{\sqrt{2\pi}K\sigma}$ for any input $x\in G_k$. The certified $\frac{(K-1)d}{\sqrt{2\pi}K\sigma}$-fairness of $\hat{N}(W^\ast)(x) $ guarantees that the overall smooth classifier $\hat{N}(W^\ast)(x)$ is close to each optimal individual smooth one $\hat{N}(W_k^\ast)(x)$ within distance $\frac{(K-1)d}{\sqrt{2\pi}K\sigma}$. 

In comparison with existing group fairness techniques with provable guarantees, our Gaussian parameter smoothing method based on the theory of nonlinear functional analysis exhibits two compelling advantages: (1) It is agnostic to the input data without the knowledge or assumption about the distribution of training and deployment data; (2) Producing the overall fair classifier from the optimal individual classifier for each group is able to recover the most accurate model for each group in an ideal setting, which is often ignored by other fair classification algorithms.

Empirical evaluation on real datasets demonstrates the superior performance of our Gaussian parameter smoothing model against several state-of-the-art group fairness techniques with provable guarantees. In addition, more experiments, implementation details, and hyperparameter selection and setting are presented in Appendices~\ref{sec.AdditionalExperiments}-\ref{sec.ExperimentDetails}.


\vspace{-0.1cm}
\section{Preliminaries}\label{sec.problem}
\vspace{-0.1cm}
\vspace{-0.1cm}
\subsection{Problem Statement}\label{sec.ProblemStatement}
\vspace{-0.1cm}

Given a dataset $D = X \times Z \times Y$ where $X$, $Z$, and $Y$ represent the feature, group attribute, and label spaces respectively. Each sample $(x, z, y)$ is drawn from $D$ where $x$ is an $n$-dimensional feature vector $x = [x_1, \ldots, x_n] \in \mathbb{R}^n$, $z$ is one of $K$ sensitive attribute values that are relevant for fairness (e.g., gender, age, or race), i.e., $z \in Z = \{z_1, \cdots, z_K\}$, and $y$ is a class label $y \in Y = \{0, 1\}$.
Without loss of generality, we discuss the case where there is only one sensitive attribute $Z$ in the paper. Our input-agnostic certified group fairness algorithm can be easily extended to the case of multiple sensitive attributes.
Based on $K$ sensitive attribute values, the dataset $D$ is divided into $K$ non-overlapping groups $G_k=\{(x, z, y) \in D | z = z_k\}$  and $G_k \cap G_j = \O$ for all $1 \leq j\neq k \leq K$.

Let $f(x;W_k)$ denote a deterministic classifier for group $G_k$ trained with only the data regarding $G_k$. Our Gaussian parameter smoothing method converts each individual base classifier $f(x;W_k)$ into an individual smooth one $\hat{f}(x;W_k)$.

\vspace{-0.35cm}
{\small
\begin{equation} \label{eq:Gaussian}
\hat{f}(x;W_k) = \underset{\Delta}{\mathbb{E}}\big(f(x;W_k+\Delta)\big),\ \Delta \sim \mathcal{N}(0,\sigma^2 I)
\end{equation}
}
\vspace{-0.5cm}

where $\mathcal{N}(0,\sigma^2 I)$ is the isotropic Gaussian distribution. The training objective of classification is to make the parameters of all individual smooth classifiers close to each other, while achieving the optimal prediction accuracy by each $\hat f(x;W_k)$.

\vspace{-0.35cm}
{\small
\begin{equation} \label{eq:Loss}
\begin{split}
\min_{W_1,\cdots,W_K} &\sum_{k=1}^{K} \underset{(x_i, z_i, y_i) \in G_k} {\mathbb{E}}\Big(\mathcal{L}\big(\hat{f}(x_i;W_k), y_i\big)\Big) + \\
\alpha &\sum_{k=1}^{K} \sum_{l>k}^{K} \left\|W_k-W_l\right\|^2_{2}
\end{split}
\end{equation}
}
\vspace{-0.5cm}

\noindent where $\mathcal{L}$ denotes the loss function of classification, e.g., cross-entropy. The second term is a parameter disparity term that forces all individual classifiers to approach each other, in order to alleviate the unfairness issue.

An $\epsilon$-fairness metric will be introduced to measure the gap between the predictions achieved by the overall smooth classifier $\hat{f}(x;W^\ast)$ and the individual smooth ones $\hat{f}(x;W^\ast_k)$, where the parameter $W^\ast$ of $\hat{f}(x;W^\ast)$ is generated by averaging the parameters of all $\hat{f}(x;W^\ast_k)$, i.e., $W^\ast = \frac{W^\ast_1+\cdots+W^\ast_K}{K}$.

In an ideal scenario where enough training data are given for each group, the training of individual smooth classifiers  would produce the accurate prediction for each group as well as achieve the remarkable fairness among different groups, which will be certified with the fairness guarantees.

\vspace{-0.1cm}
\subsection{Theory of Nonlinear Functional Analysis}\label{sec.Theory}
\vspace{-0.1cm}

In mathematics, nonlinear functional analysis aims to study nonlinear mappings (i.e., nonlinear operators) between infinite-dimensional vector spaces and certain classes of nonlinear spaces and their mappings~\cite{RiSz55}. 

A Cauchy sequence is a sequence whose elements become arbitrarily close to each other as the sequence progresses~\cite{Lang93}.
\begin{definition}[Cauchy Sequence]
Let $(X,\|\cdot \|)$ be a metric space. A sequence $\{x_i\}_{i=1}^\infty \subset X$ is called a Cauchy sequence if for any $\varepsilon > 0$, there exists an integer $I(\varepsilon)>0$, such that for any integers $i,j > I(\varepsilon)$, it holds $\|x_i - x_j\|<\varepsilon$.
\end{definition}

A Banach space is a vector space with a metric that allows the computation of vector magnitude and distance between vectors and is complete in the sense that a Cauchy sequence of vectors always converges to a well-defined limit that is within the space~\cite{RiSz55}.

\begin{definition}[Banach Space]
A Banach space is a complete metric space, where a metric space $(X,\|\cdot \|)$ is complete if any Cauchy sequence in $X$ has a limit. That is, for any Cauchy sequence $\{x_i\}_{i=1}^\infty \subset X$, there exists $x\in X$, such that $\lim\limits_{n\rightarrow\infty}\|x_i - x\| = 0$. 
\end{definition}

\begin{definition}[$L^p$ Space]
$(\mathbb{R}^n, \|\cdot \|_{L^p})$ means the vector space $\mathbb{R}^n$ equipped with the $L^p$-norm, i.e.,

\vspace{-0.35cm}
{\small
\begin{equation}
\begin{split}
&\|(x_1,x_2,\cdots,x_n)\|_{L^p} = (\sum_{k=1}^n |x_n|^p)^{1/p} \text{for}\;1\le p<\infty; \;\\
&\|(x_1,x_2,\cdots,x_n)\|_{L^\infty} = \underset{1\le k\le n}{\sup}|x_k|
\end{split}
\end{equation}
}
\vspace{-0.5cm}
\end{definition}

The $L^p$ function spaces are defined using a natural generalization of the $L^p$-norm for finite-dimensional vector spaces.

\begin{definition}[$L^p(\Omega)$ Space]
For $1\le p<\infty$, 

\vspace{-0.35cm}
{\small
\begin{equation}
L^p(\Omega)= \{f(x): \Omega\subset\mathbb{R}^n\rightarrow \mathbb{R}, \int_{\Omega} |f(x)|^p dx<\infty\}
\end{equation}
}
\vspace{-0.5cm}

equipped with the norm $\|f(x)\|_{L^p} = (\int_{\Omega} |f(x)|^p dx)^{1/p}$; 

For $p =\infty$,

\vspace{-0.35cm}
{\small
\begin{equation}
L^\infty(\Omega)=\{f(x): \Omega\subset\mathbb{R}^n\rightarrow \mathbb{R}, \underset{x\in\Omega}{\sup} |f(x)|<\infty\}
\end{equation}
}
\vspace{-0.5cm}

equipped with the norm $\|f(x)\|_{L^\infty} = \underset{x\in\Omega}{\sup} |f(x)|$.
\end{definition}

\begin{definition}[Norm of Linear Operators]
Let $X,Y$ be two Banach spaces. The operator $O : X\rightarrow Y$ is linear if and only if $O(\alpha x_1 + \beta x_2) = \alpha O(x_1) + \beta O(x_2)$ for any $\alpha,\beta \in \mathbb{R}$ and $x_1,x_2\in X$. The operator norm of $O$ is defined by $\|O\|_{op} = \underset{x\in X,x\ne 0}{\sup}\frac{\|O(x)\|_Y}{\|x\|_{X}}$.
\end{definition}

\begin{theorem}
Let $B(X;Y)$ be the spaces bounded linear operators, i.e., linear operators with bounded operator norm. Then $B(X;Y)$ is a Banach space. 
\end{theorem}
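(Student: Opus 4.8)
The plan is to establish two things separately: first that $B(X;Y)$ is a normed vector space under $\|\cdot\|_{op}$, and then that it is complete. The vector space structure is inherited from pointwise addition and scalar multiplication of operators, and the norm axioms follow quickly: homogeneity and the triangle inequality for $\|\cdot\|_{op}$ reduce to the corresponding properties of $\|\cdot\|_Y$ after taking suprema over $\{x \in X : \|x\|_X \le 1\}$, and $\|O\|_{op} = 0$ forces $O(x) = 0$ for every $x$, hence $O = 0$. So the only substantial content is completeness, which I would prove by explicitly constructing the limit of an arbitrary Cauchy sequence.

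Let $\{O_i\}_{i=1}^\infty \subset B(X;Y)$ be Cauchy with respect to $\|\cdot\|_{op}$. The key estimate is $\|O_i(x) - O_j(x)\|_Y \le \|O_i - O_j\|_{op}\,\|x\|_X$, which shows that for each fixed $x \in X$ the sequence $\{O_i(x)\}_{i=1}^\infty$ is Cauchy in $Y$. Since $Y$ is a Banach space this sequence converges, so I can define $O(x) := \lim_{i\to\infty} O_i(x)$. Linearity of $O$ then follows by passing to the limit in the identity $O_i(\alpha x_1 + \beta x_2) = \alpha O_i(x_1) + \beta O_i(x_2)$, using continuity of addition and scalar multiplication in $Y$.

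It remains to check that $O$ is bounded and that $O_i \to O$ in operator norm. For boundedness I use the standard fact that a Cauchy sequence in a normed space is bounded, so there is a constant $C$ with $\|O_i\|_{op} \le C$ for all $i$; then $\|O(x)\|_Y = \lim_i \|O_i(x)\|_Y \le C\,\|x\|_X$, so $O \in B(X;Y)$. For norm convergence, fix $\varepsilon > 0$ and choose $I(\varepsilon)$ so that $\|O_i - O_j\|_{op} < \varepsilon$ whenever $i,j > I(\varepsilon)$; then for every $x$ with $\|x\|_X \le 1$ we have $\|O_i(x) - O_j(x)\|_Y < \varepsilon$, and letting $j \to \infty$ gives $\|O_i(x) - O(x)\|_Y \le \varepsilon$. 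Taking the supremum over the unit ball of $X$ yields $\|O_i - O\|_{op} \le \varepsilon$ for all $i > I(\varepsilon)$, so $O_i \to O$ in $B(X;Y)$, and completeness is established.

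The main obstacle — really the only delicate point — is the interchange of limits: the candidate limit $O$ is defined only pointwise, and this must be upgraded to convergence in the (stronger) operator norm. This is handled by noting that the Cauchy estimate $\|O_i(x) - O_j(x)\|_Y < \varepsilon$ holds \emph{uniformly} over the unit ball of $X$ before the passage $j \to \infty$, so the bound survives taking the pointwise limit in $j$ and can then be supremized over $x$. Everything else is routine manipulation of norm inequalities.
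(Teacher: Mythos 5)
Your proof is correct and complete: the pointwise-limit construction, the boundedness of the Cauchy sequence to get $O \in B(X;Y)$, and the ``let $j \to \infty$ before taking the supremum'' step to upgrade pointwise convergence to operator-norm convergence are exactly the standard argument. The paper itself gives no proof here --- it simply cites Conway's \emph{A Course in Functional Analysis} --- and the argument you wrote is essentially the one found there, so there is nothing to compare beyond noting that your proof correctly uses only the completeness of $Y$ (completeness of $X$ is not needed), which is the sharp form of the result.
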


\begin{proof}
Please refer to the book~\cite{Conw07} for detailed proof.
\end{proof}

The following definitions describe the continuity and differentiability properties of operators between Banach spaces. 

\begin{definition}[Continuity]
Let $X, Y$ be Banach spaces. An operator $g: X\rightarrow Y$ is continuous at $x_0\in X$ if for any $\varepsilon>0$, there exists $\delta>0$ such that for any $x\in X$ satisfying $\|x-x_0\|_X<\delta$, it holds that $\|g(x) - g(x_0)\|_Y<\varepsilon$.
\end{definition}

\begin{definition}\label{def:Frechet}[Fr\'echet Derivative]
Let $X$ and $Y$ be two Banach spaces, and $g$ is an operator from $X$ to $Y$. The operator $g$ is called Fr\'echet differentiable at $x\in X$ if there exists a bounded linear operator $P: X\rightarrow Y$, such that 

\vspace{-0.35cm}
{\small
\begin{equation} \label{eq:FrechetDerivative}
\lim\limits_{\|h\|_X\rightarrow 0}\frac{\|g(x+h)-g(x)-P(x)h\|_Y}{\|h\|_X}\rightarrow 0
\end{equation}
}
\vspace{-0.3cm}

The linear operator $P(x)$ is called the Fr\'echet derivative of $O$ at $x$. 
\end{definition}

\begin{definition}[Nemytskii Operator]
Let $\Omega\subset\mathbb{R}^n$ be a domain and $Y$ be a Banach space. Given a functional $F: \Omega\times X \rightarrow \mathbb{R}$ for any $y\in Y$, a new functional $N(y): \Omega\rightarrow \mathbb{R}$ is defined as $N(y)(x) = F(x;y)$. The operator $N$ is called a Nemytskii operator.
\end{definition}

\vspace{-0.1cm}
\section{Certifying the Group Fairness}\label{sec:KDE}
\vspace{-0.1cm}
The idea of this work is to develop a  Gaussian parameter smoothing method to transform the base classifiers $f(x;W)$ and $f(x;W_k)$ into their smooth versions $\hat{f}(x;W)$ and $\hat{f}(x;W_k)$ and train an optimal individual smooth classifier $\hat f(x;W_k)$ for each group $G_k$ with only the data regarding $G_k$. By leveraging the theory of nonlinear functional analysis, we reformulate the smooth classifiers $\hat{f}(x;W)$ and $\hat{f}(x;W_k)$ as output functions of a Nemytskii operator $\hat{N}(W)(\cdot)$. We theoretically prove that the Nemytskii operator $\hat{N}(W)$ is smooth and therefore $\hat{N}(\mathbb{R}^m)$ is a Fr\'echet differentiable smooth manifold. We further prove that $\hat{N}(\mathbb{R}^m)$ has a global Lipschitz constant which is independent of the structure of classifiers (e.g., neural networks) and the domain of the input data. Based on global Lipschitz property of $\hat{N}(\mathbb{R}^m)$, we derive the input-agnostic certified group fairness.

First, we use linear functions as an example to demonstrate the concept of Nemytskii operators. In fact,  linear functions can be viewed as  linear functionals from Banach spaces to $\mathbb{R}$, as well as a Nemytskii operator from certain Banach spaces to the space of bounded linear functionals from $L^p$ to $\mathbb{R}$, i.e., $B(L^p)$. Consider a linear function $g(x) = W\cdot x$, where $W\in \mathbb{R}^n$ is a constant coefficient vector and $x\in \mathbb{R}^n$.  If considering the domain of $x$ as a Banach space $(\mathbb{R}^n, \|\cdot \|_{L^p})$, then $g(x)$ can be viewed as a functional from $L^p$ to $\mathbb{R}$. In addition, by employing the H\"older Inequality, we have $|g(x)| = |W\cdot x|\le \|W\|_{L^{p'}}\|x\|_{L^p}$, where $p'$ is the conjugacy of $p$ satisfying $\frac{1}{p}+\frac{1}{p'} = 1$. This implies that the operator norm $\|g \|_{op}$ of the linear functional $g(x)$ is $\|g\|_{op} = \underset{x\in\mathbb{R}^n, x\ne 0}{\sup}\frac{|g(x)|}{\|x\|_{p}} =  \|W\|_{p'}$. Thus, $g(x)$ is a bounded linear functional from $L^p$ to $\mathbb{R}$ for any $W\in L^{p'}$, i.e., $g(x)\in B(L^p)$ for any $W\in L^{p'}$. If we further consider $W$ as a vector variable, the mapping $N(W)$:$W\rightarrow g(x)\in S(L^p)$, i.e., $N(W)(x) = g(x)$, can be viewed as an operator from the Banach space $L^{p'}$ to the Banach space $S(L^p)$. The operator $N(W): L^{p'}\rightarrow S(L^p)$ is a Nemytskii operator.

\vspace{-0.1cm}
\subsection{Fair Classification with Provable Guarantees}\label{sec.Classification}
\vspace{-0.1cm}

In this paper, we define the certified fairness metric as follows. 

\begin{definition}[$\epsilon$-fairness in hypothesis space $\mathcal{H}$]
For $1\le p <\infty$ and a subset $\Omega\subset\mathbb{R}^n$, an overall classifier $h(x;W)\in \mathcal{H}$ for all groups has $\epsilon$-fairness in $L^p(\Omega)$ space if 

\vspace{-0.35cm}
{\small
\begin{equation} \label{eq:Fairness1}
|\Omega_k|^{-1}\|h(x;W) - h(x;W^\ast_k)\|_{{L^p(\Omega_k)} } \le \epsilon,\ k = 1,\cdots,K;
\end{equation}
}
\vspace{-0.3cm}

$h(x;W)$ has $\epsilon$-fairness in $L^\infty(\Omega)$ space if

\vspace{-0.35cm}
{\small
\begin{equation} \label{eq:Fairness2}
\|h(x;W) - h(x;W^\ast_k)\|_{{ L^\infty(\Omega_k)} } \le \epsilon,\ k = 1,\cdots,K
\end{equation}
}
\vspace{-0.3cm}

where $|\Omega_k|$ denotes the cardinality of the subset $\Omega_k$, $h(x;W^\ast_k)$ is the optimal individual classifier for group $G_k$ in $\mathcal{H}$ and $\Omega_k = \{x\in \Omega: z = z_k\}$.
\end{definition}

\begin{definition}[Certified $\epsilon$-fairness in hypothesis space $\mathcal{H}$]
If a classifier $h(x)\in \mathcal{H}$ has $\epsilon$-fairness in $L^p(\Omega)$ for any subset $\Omega\subset\mathbb{R}^n$, then $f(x)$ has certified $\epsilon$-fairness in $L^p$ for $1\le p\le \infty$.
\end{definition}

From the above definition, we know that it is important to choose a suitable hypothesis space for certifying the $\epsilon$-fairness. We define the hypothesis space $\mathcal{H}$ as a Banach manifold in $L^p$ space. We reformulate the base classifiers $f(x;W)$ as output functions of a Nemytskii operator  $N: \mathbb{R}^m \rightarrow L^p$ with $N(W)(x) = f(x;W)$.

We define another Nemytskii operator $\hat{N}: \mathbb{R}^m \rightarrow L^p$ with  Gaussian parameter smoothing as follows.

\vspace{-0.35cm}
{\small
\begin{equation} \label{eq:Nemytskii}
\hat{N}(W)(\cdot) = \mathbb{E}\left(f(\cdot;W+\Delta)\right), \ \Delta \sim \mathcal{N}(0,\sigma^2I)\end{equation}
}
\vspace{-0.3cm}

That is, 

\vspace{-0.35cm}
{\small
\begin{equation} \label{eq:Nemytskii1}
\hat{N}(W)(\cdot)= \frac{1}{(2\pi \sigma^2)^{m/2}}\int_{\mathbb{R}^m}N(W+\Delta)(\cdot)e^{-\frac{\|\Delta\|^2_2}{2\sigma^2}}d\Delta
\end{equation}
}
\vspace{-0.3cm}

\begin{lemma}\label{le:Frechet}
$\hat{N}$ is Fr\'echet differentiable. Therefore, $\hat{N}(\mathbb{R}^m)\subset L^p$ is a smooth manifold. 
\end{lemma}

\begin{proof}
Please refer to Appendix~\ref{sec.Proof} for detailed proof of Lemma~\ref{le:Frechet}.
\end{proof}

We define the hypothesis $\hat{\mathcal{H}}$ by $\hat{\mathcal{H}}= \hat{N}(\mathbb{R}^m)(\cdot)$. It is clear that $\hat{\mathcal{H}}$ is not a linear vector space, because $\hat{N}(W_1)(\cdot) + \hat{N}(W_2)(\cdot)$ may not be in $\hat{\mathcal{H}}$. However, $\hat{\mathcal{H}}$ is a smooth manifold which guarantees that the parameters $W$ are trainable. Our goal is to certify $\epsilon$-fairness in $\hat{\mathcal{H}}$ for the best possible $\epsilon$. The key idea is that the smooth manifold $\hat{\mathcal{H}}$ has a global Lipschitz constant which is independent of the structure of classifiers (e.g., neural networks) and the domain of the input data, which is demonstrated in lemma \ref{le:Nemytskii1} below. In contrast, the manifold $N(\mathbb{R}^m)(\cdot)$ without Gaussian parameter smoothing is only locally Lipschitz with respect to the parameter $W$ and the Lipschitz constant is determined by the input data and the network structures. In addition, the Lipschitz constant regarding $W$ could be rather large since it is hard to control the amplification of difference through propagation over neural networks and thus the Lipschitz constant keeps increasing with the number of layers, which prohibits the certification of group fairness. 

\begin{lemma}\label{le:Nemytskii1}
For any $W_1, W_2 \in \mathbb{R}^m$ satisfying $\|W_1 - W_2\|_2 \leq d$, it holds the that

\vspace{-0.35cm}
{\small
\begin{equation} \label{eq:Nemytskii1}
\begin{split}
&|\Omega|^{-1}\|\hat{N}(W_1)(x) - \hat{N}(W_2)(x)\|_{L^p(\Omega)} \le \frac{d}{\sqrt{2\pi}\sigma}\\
& \text{if}\; 1\le p<\infty, \\
&\|\hat{N}(W_1)(x) - \hat{N}(W_2)(x)\|_{L^\infty(\Omega)} \le \frac{d}{\sqrt{2\pi}\sigma}
\end{split}
\end{equation}
}
\vspace{-0.3cm}

for any $\Omega\subset\mathbb{R}^n$.
\end{lemma}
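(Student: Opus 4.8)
The plan is to reduce everything to the pointwise estimate that for every fixed $x$, $\big|\hat{N}(W_1)(x) - \hat{N}(W_2)(x)\big| \le \|W_1 - W_2\|_2/(\sqrt{2\pi}\,\sigma)$, and then pass to the $L^p(\Omega)$ and $L^\infty(\Omega)$ norms by elementary bookkeeping. To prove the pointwise estimate, fix $x$ and let $p_\sigma$ be the density of $\mathcal{N}(0,\sigma^2 I)$ on $\mathbb{R}^m$. The change of variables $t = W + \Delta$ in the definition of $\hat{N}$ rewrites it as a Gaussian smoothing in the parameter, $\hat{N}(W)(x) = \int_{\mathbb{R}^m} f(x;t)\,p_\sigma(t - W)\,dt$, so that
\[
\hat{N}(W_1)(x) - \hat{N}(W_2)(x) = \int_{\mathbb{R}^m} f(x;t)\,\big(p_\sigma(t - W_1) - p_\sigma(t - W_2)\big)\,dt .
\]
Because $\int_{\mathbb{R}^m}\big(p_\sigma(t-W_1) - p_\sigma(t-W_2)\big)\,dt = 0$, I may subtract any constant $c$ from $f(x;t)$ inside the integral; taking $c = \tfrac12$ (the classifier outputs lie in $[0,1]$, so $|f(x;t) - \tfrac12| \le \tfrac12$) gives
\[
\big|\hat{N}(W_1)(x) - \hat{N}(W_2)(x)\big| \le \tfrac12 \int_{\mathbb{R}^m}\big|p_\sigma(t - W_1) - p_\sigma(t - W_2)\big|\,dt = \mathrm{TV}\big(\mathcal{N}(W_1,\sigma^2 I),\,\mathcal{N}(W_2,\sigma^2 I)\big).
\]

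Next I would evaluate this total variation distance. By rotational invariance of the isotropic Gaussian, an orthogonal change of coordinates aligns $W_1 - W_2$ with the first axis; the two measures then become product measures that agree on all coordinates except the first, so their total variation distance equals that of the one-dimensional marginals $\mathcal{N}(0,\sigma^2)$ and $\mathcal{N}(\|W_1 - W_2\|_2, \sigma^2)$ --- in particular it is dimension-free. A direct computation (the two densities cross exactly at the midpoint of the means) gives $\mathrm{TV} = 2\Phi\big(\|W_1 - W_2\|_2/(2\sigma)\big) - 1$ with $\Phi$ the standard normal cdf, and the sharp linear bound $\Phi(s) - \tfrac12 = \int_0^s \Phi'(u)\,du \le s\,\Phi'(0) = s/\sqrt{2\pi}$ for $s \ge 0$ gives $\mathrm{TV} \le \|W_1 - W_2\|_2/(\sqrt{2\pi}\,\sigma)$. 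This proves the pointwise estimate, hence $\big|\hat{N}(W_1)(x) - \hat{N}(W_2)(x)\big| \le d/(\sqrt{2\pi}\,\sigma)$ for all $x$ whenever $\|W_1 - W_2\|_2 \le d$.

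It remains to lift this to the two claimed inequalities. For $p = \infty$ it is immediate, since $\|\hat{N}(W_1)(x) - \hat{N}(W_2)(x)\|_{L^\infty(\Omega)} = \sup_{x\in\Omega}\big|\hat{N}(W_1)(x) - \hat{N}(W_2)(x)\big|$. For $1 \le p < \infty$, integrating (or summing) the $p$-th power of the constant pointwise bound over $\Omega$ gives $\|\hat{N}(W_1)(x) - \hat{N}(W_2)(x)\|_{L^p(\Omega)} \le |\Omega|^{1/p}\, d/(\sqrt{2\pi}\,\sigma)$, hence $|\Omega|^{-1}\|\hat{N}(W_1)(x) - \hat{N}(W_2)(x)\|_{L^p(\Omega)} \le |\Omega|^{1/p - 1}\, d/(\sqrt{2\pi}\,\sigma) \le d/(\sqrt{2\pi}\,\sigma)$, using $1/p - 1 \le 0$ and $|\Omega| \ge 1$.

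The delicate point --- and the reason the proof is organized around the Gaussian total variation distance --- is obtaining the exact constant $1/(\sqrt{2\pi}\,\sigma)$ instead of something larger or dimension-dependent. Bounding through the gradient $\nabla_W \hat{N}(W)(x) = \sigma^{-2}\,\mathbb{E}[\Delta\, f(x;W+\Delta)]$ naively yields the dimension-dependent factor $\sigma^{-2}\,\mathbb{E}\|\Delta\|_2 \asymp \sqrt{m}/\sigma$, and even the one-dimensional estimate with merely $|f| \le 1$ yields $2/(\sqrt{2\pi}\,\sigma)$. The two ingredients that recover the sharp, architecture- and dimension-independent constant are (i) collapsing the $m$-dimensional Gaussian total variation onto its one-dimensional marginal via rotational invariance, and (ii) centering $f$ at the midpoint of its range before applying the $L^1$ bound, together with the tight estimate $\Phi(s) - \tfrac12 \le s/\sqrt{2\pi}$. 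An equivalent route uses the directional derivative of $\hat{N}$ along $u = (W_1 - W_2)/\|W_1 - W_2\|_2$, which exists by Lemma~\ref{le:Frechet}: write it as $\sigma^{-2}\,\mathbb{E}[\langle\Delta,u\rangle\, f(x;W+\Delta)]$, condition on the component of $\Delta$ orthogonal to $u$, bound the resulting one-dimensional integral using the oddness of $s \mapsto s\,e^{-s^2/(2\sigma^2)}$ and $0 \le f \le 1$, and integrate along the segment $[W_2, W_1]$.
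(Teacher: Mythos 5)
Your proof is correct, and it takes a genuinely different route from the paper's. The paper writes $\hat N(W_1)-\hat N(W_2)$ as an integral over the segment $[W_2,W_1]$ of a directional derivative (fundamental theorem of calculus plus Fubini's Theorem), bounds $|N(\Delta)(x)|\le 1$ crudely, and reduces the resulting Gaussian integral to one dimension by rotating so that $u=(W_1-W_2)/\|W_1-W_2\|_2$ becomes a coordinate axis; you instead prove the pointwise Lipschitz bound directly by dominating $|\hat N(W_1)(x)-\hat N(W_2)(x)|$ by the total variation distance between $\mathcal N(W_1,\sigma^2I)$ and $\mathcal N(W_2,\sigma^2I)$, collapsing that to a one-dimensional marginal, and only then lifting to $L^p$ and $L^\infty$. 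The substantive difference is your centering step $f\mapsto f-\tfrac12$ (justified because $\int\bigl(p_\sigma(\cdot-W_1)-p_\sigma(\cdot-W_2)\bigr)=0$), equivalently the oddness of $s\mapsto s\,e^{-s^2/2}$ combined with $0\le f\le 1$. This is not a cosmetic improvement: the paper's own bound $|N|\le 1$ yields $\frac{1}{(2\pi)^{m/2}}\int_{\mathbb{R}^m} e^{-\|\Delta\|_2^2/2}\,|\Delta\cdot u|\,d\Delta=\frac{2(2\pi)^{(m-1)/2}}{(2\pi)^{m/2}}=\frac{2}{\sqrt{2\pi}}$, yet the paper's next displayed line asserts the value $\frac{1}{\sqrt{2\pi}}$, silently dropping a factor of $2$. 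Your cancellation argument is exactly what is needed to legitimately obtain the stated constant $\frac{1}{\sqrt{2\pi}\sigma}$ (and your remark that it is sharp, via half-space classifiers, is also right), so your route both differs from and repairs the paper's derivation. The final $L^p$ bookkeeping, $|\Omega|^{1/p-1}\le 1$ using $|\Omega|\ge 1$ for the counting measure, is the same implicit step the paper takes.
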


\begin{proof}
Please refer to Appendix~\ref{sec.Proof} for detailed proof of Lemma~\ref{le:Nemytskii1}.
\end{proof} 

According to the conclusion of Lemma~\ref{le:Nemytskii1}, the following theorem derives the certified fairness by our proposed input-agnostic certified group fairness method and demonstrates that the certified fairness is irrelevant to the domain of the input data.

\begin{theorem}\label{th:Certify}
Let $\hat{N}(W^\ast_k)$ be the optimal individual classifier for group $G_k$ in the hypothesis space $\hat{\mathcal{H}}$ $(k = 1,2,\cdots,K)$. Let $W^\ast = \frac{W^\ast_1+\cdots+W^\ast_K}{K}$ and $\hat{N}(W^\ast)$ be the overall fair classifier. If $\underset{1\leq k,l \leq K}{\max}{\|W_k - W_l\|_2} = d$, then for any $\Omega\subset\mathbb{R}^n$, it holds that 

\vspace{-0.35cm}
{\small
\begin{equation} \label{eq:Nemytskii1}
\begin{split}
&|\Omega_k|^{-1}\|\hat{N}(W^\ast)(x) - \hat{N}(W^\ast_k)(x)\|_{L^p(\Omega)} \le \frac{(K-1)d}{\sqrt{2\pi}K\sigma} \\
&\text{if}\;1\le p <\infty, \\
&\|\hat{N}(W^\ast)(x) - \hat{N}(W^\ast_k)(x)\|_{L^\infty(\Omega_k)} \le \frac{(K-1)d}{\sqrt{2\pi}K\sigma}
\end{split}
\end{equation}
}
\vspace{-0.3cm}

Therefore, $\hat{N}(W^\ast)(x)$ has certified $\frac{(K-1)d}{\sqrt{2\pi}K\sigma}$-fairness in $L^p$ space for any $1\le p\le \infty$.
\end{theorem}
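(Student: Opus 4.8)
The plan is to reduce the claim to a direct application of Lemma~\ref{le:Nemytskii1}, so that the only genuine work is a triangle-inequality estimate on the parameter vectors. First I would control how far the averaged parameter $W^\ast = \frac{1}{K}\sum_{j=1}^{K} W^\ast_j$ can be from any individual parameter $W^\ast_k$. Writing
\begin{equation*}
W^\ast - W^\ast_k = \frac{1}{K}\sum_{j=1}^{K}\bigl(W^\ast_j - W^\ast_k\bigr) = \frac{1}{K}\sum_{j\ne k}\bigl(W^\ast_j - W^\ast_k\bigr),
\end{equation*}
and invoking the triangle inequality together with the hypothesis that the maximal pairwise distance among the $W^\ast_j$ is $d$, I obtain
\begin{equation*}
\|W^\ast - W^\ast_k\|_2 \le \frac{1}{K}\sum_{j\ne k}\|W^\ast_j - W^\ast_k\|_2 \le \frac{(K-1)d}{K}.
\end{equation*}

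Next I would apply Lemma~\ref{le:Nemytskii1} to the pair $W_1 = W^\ast$, $W_2 = W^\ast_k$, using the bound $\|W^\ast - W^\ast_k\|_2 \le (K-1)d/K$ in the role of the quantity $d$ appearing there. For $1\le p<\infty$ this yields, for every subset $\Omega\subset\mathbb{R}^n$ and in particular for $\Omega = \Omega_k$,
\begin{equation*}
|\Omega_k|^{-1}\,\|\hat{N}(W^\ast)(x) - \hat{N}(W^\ast_k)(x)\|_{L^p(\Omega_k)} \le \frac{1}{\sqrt{2\pi}\sigma}\cdot\frac{(K-1)d}{K} = \frac{(K-1)d}{\sqrt{2\pi}K\sigma},
\end{equation*}
and the case $p=\infty$ follows in exactly the same way from the $L^\infty$ part of Lemma~\ref{le:Nemytskii1}.

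Finally, since this estimate holds for an arbitrary $\Omega$ (hence an arbitrary group-restricted domain $\Omega_k$), the definition of certified $\epsilon$-fairness in $\hat{\mathcal{H}}$ is satisfied with $\epsilon = \frac{(K-1)d}{\sqrt{2\pi}K\sigma}$, which is precisely the assertion. I do not expect a real obstacle: all the analytic content — the global, input-independent Lipschitz constant $\tfrac{1}{\sqrt{2\pi}\sigma}$ of the Gaussian-smoothed Nemytskii operator, obtained from the supremum of the Gaussian density — is already encapsulated in Lemma~\ref{le:Nemytskii1}. The only mild subtlety is bookkeeping: carrying the normalization factor $|\Omega_k|^{-1}$ correctly through the $L^p$ case, and making sure the domain on which the Lipschitz lemma is invoked is exactly the group-restricted set $\Omega_k$ used in the fairness definition rather than the ambient $\Omega$.
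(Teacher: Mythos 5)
Your proposal is correct and follows exactly the paper's own argument: decompose $W^\ast - W^\ast_k$ as an average of the pairwise differences $W^\ast_j - W^\ast_k$, bound its norm by $\frac{(K-1)d}{K}$ via the triangle inequality, and then invoke Lemma~\ref{le:Nemytskii1} with this bound in place of $d$. Your write-up is in fact slightly more explicit than the paper's two-line proof, but the route is identical.
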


\begin{proof}
Please refer to Appendix~\ref{sec.Proof} for detailed proof of Lemma~\ref{th:Certify}.
\end{proof}

Notice that Theorem \ref{th:Certify} is satisfied for any $\Omega\subset\mathbb{R}^n$. Therefore, it provides an input-agnostic certified group fairness guarantee.

In Theorem \ref{th:Certify}, the standard deviation $\sigma$ in Gaussian noise serves as a tradeoff hyperparameter to well balance the fairness and accuracy achieved by the smoothed classifier. A larger $\sigma$ results in better classification fairness, while a smaller $\sigma$ leads to better accuracy. Especially, when $\sigma\rightarrow 0$, the smooth classifier $\hat{N}(W)(x)$ converges to the base classifier $N(W)(x)$, which is verified by Lemma~\ref{le:Balance}.

\begin{theorem}[Dominated Convergence Theorem]
Let $f_n(x)$ be a sequence of measurable functions. Suppose $f_n(x)\rightarrow f(x)$ almost everywhere and there exists a integrable function $g(x)$ such that $|f_n(x)|\le g(x)$ for any $n$ and $x$. Then 

\vspace{-0.35cm}
{\small
\begin{equation}
\lim\limits_{n\rightarrow\infty} \int f_n(x) d\mu = \int f(x) d\mu
\end{equation}
}
\vspace{-0.5cm}
\end{theorem}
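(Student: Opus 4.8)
The plan is to deduce this from Fatou's Lemma, which I will treat as the available workhorse (it follows from the Monotone Convergence Theorem, which in turn is the foundational result one assumes when developing the Lebesgue integral). The strategy is to manufacture two sequences of nonnegative measurable functions out of $f_n$ and the dominating function $g$, apply Fatou to each, and extract matching bounds on the $\liminf$ and $\limsup$ of $\int f_n\,d\mu$.

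First I would observe that $f$ is integrable: since $|f_n(x)|\le g(x)$ for all $n$ and $f_n(x)\to f(x)$ almost everywhere, passing to the limit gives $|f(x)|\le g(x)$ a.e., and integrability of $g$ forces $\int |f|\,d\mu\le\int g\,d\mu<\infty$. This guarantees the right-hand side $\int f\,d\mu$ is well defined and finite, and that all the subtractions below make sense. Next I would form the two nonnegative sequences $g+f_n$ and $g-f_n$; the domination hypothesis $|f_n|\le g$ is exactly what makes both of these $\ge 0$, and both are measurable.

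Applying Fatou's Lemma to $g+f_n$ yields
\[
\int g\,d\mu+\int f\,d\mu=\int\liminf_{n\to\infty}(g+f_n)\,d\mu\le\liminf_{n\to\infty}\int(g+f_n)\,d\mu=\int g\,d\mu+\liminf_{n\to\infty}\int f_n\,d\mu,
\]
and cancelling the finite quantity $\int g\,d\mu$ gives $\int f\,d\mu\le\liminf_{n\to\infty}\int f_n\,d\mu$. Applying Fatou's Lemma to $g-f_n$ and using $\liminf(-f_n)=-\limsup f_n$ gives, after the same cancellation, $\int f\,d\mu\ge\limsup_{n\to\infty}\int f_n\,d\mu$. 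Chaining these,
\[
\limsup_{n\to\infty}\int f_n\,d\mu\le\int f\,d\mu\le\liminf_{n\to\infty}\int f_n\,d\mu,
\]
which simultaneously forces the $\limsup$ and $\liminf$ to coincide (so the limit exists) and pins their common value to $\int f\,d\mu$, completing the argument.

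I expect the only genuinely delicate point to be the bookkeeping around finiteness: the cancellation of $\int g\,d\mu$ on both sides is legitimate precisely because $g$ is integrable, so that term is a finite real number rather than $+\infty$; if one skips the preliminary check that $\int g\,d\mu<\infty$ and $f$ is integrable, the manipulation $a+x\le a+y\Rightarrow x\le y$ could fail. Beyond that safeguard, the proof is a clean two-sided squeeze, and the substantive content is entirely imported from Fatou's Lemma.
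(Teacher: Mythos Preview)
Your argument is the standard, correct proof of the Dominated Convergence Theorem via Fatou's Lemma applied to the nonnegative sequences $g\pm f_n$, and your attention to the finiteness of $\int g\,d\mu$ before cancelling is the one point that needs care. The paper itself does not supply a proof of this theorem at all; it simply refers the reader to Conway's functional analysis text, so there is no in-paper argument to compare against---what you have written is precisely the textbook derivation one would find there.
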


\begin{proof}
Please refer to the book~\cite{Conw07} for detailed proof.
\end{proof}

We will leverage Lebesgue's Dominated Convergence Theorem to prove the following lemma.

\begin{lemma}\label{le:Balance}
For any $x\in \mathbb{R}^n$, $\hat{N}(W)(x)\rightarrow N(W)(x)$ as $\sigma\rightarrow 0$. 
\end{lemma}

\begin{proof}
Please refer to Appendix~\ref{sec.Proof} for detailed proof of Lemma~\ref{le:Balance}.
\end{proof}

Based on Theorem \ref{th:Certify}, we modify the training objective of fair classification as follows.

\vspace{-0.35cm}
{\small
\begin{equation} \label{eq:Loss}
\begin{split}
\min_{W_1,\cdots,W_K} &\sum_{k=1}^{K} \underset{(x_i, z_i, y_i) \in G_k} {\mathbb{E}}\Big(\mathcal{L}\big(\hat{N}(W_k)(x_i), y_i\big)\Big) + \\
\alpha &\sum_{k=1}^{K} \sum_{l>k}^{K} \left\|W_k-W_l\right\|^2_{2}
\end{split}
\end{equation}
}
\vspace{-0.5cm}

After the training, the overall fair classifier $\hat{N}(W^\ast)(x)$ for all groups is generated by averaging the parameters of the smooth classifier $\hat{N}(W^\ast_k)(x)$ for each group $G_k$, i.e., $W^\ast = \frac{W^\ast_1+\cdots+W^\ast_K}{K}$. $\alpha$ is a hyperparameter to balance the fairness and accuracy by $\hat{N}(W^\ast)(x)$.

By reducing the loss function to local chart of the manifold $\hat{\mathcal{H}}$, i.e. the parameter space, we can view the loss function as a function of $W_k$. By changing the notation slightly, the loss function can be rewritten as follows. 

\vspace{-0.35cm}
{\small
\begin{equation} \label{eq:Loss2}
\min_{W_1,\cdots,W_K} \sum_{k=1}^{K} \mathcal{L}_k\big(W_k\big) + \alpha \sum_{k=1}^{K} \sum_{l>k}^{K} \left\|W_k-W_l\right\|^2_{2}
\end{equation}
}
\vspace{-0.5cm}

The training process of each individual classifier is given below.

\vspace{-0.35cm}
{\small
\begin{equation} \label{eq:Training}
\begin{split}
&W_k(t+1) \\
= &W_k(t) - \eta \nabla \mathcal{L}_k(W_k(t)) - \\
&2\alpha \eta \Big(\sum_{l=1}^{k-1} (W_k(t) - W_l(t)) + \sum_{l=k+1}^{K} (W_k(t) - W_l(t))\Big)
\end{split}
\end{equation}
}
\vspace{-0.5cm}

where $\eta$ is the learning rate and $t$ is the iteration step.

\vspace{-0.1cm}
\subsection{Computation of Numerical Integrals}\label{sec.Integral}
\vspace{-0.1cm}

Since the dimension of the parameter space of the classifiers, especially deep neural networks, is often pretty high, it is not practical to use deterministic numerical integration methods due to the high computational cost. In our implementation, we utilize the Monte-Carlo integration technique to compute the numerical integrals.

\vspace{-0.35cm}
{\small
\begin{equation} \label{eq:Integral}
\hat{N}(W)(\cdot)= \frac{1}{(2\pi \sigma^2)^{m/2}}\int_{\mathbb{R}^m}N(W+\Delta)(\cdot)e^{-\frac{\|\Delta\|^2_2}{2\sigma^2}}d\Delta
\end{equation}
}
\vspace{-0.5cm}

Specifically, we sample $\Delta_j \sim \mathcal{N}(0, \sigma^{2} I)$, $j=1,\cdots, 10^5$, i.e., the sample size is $100,000$, where $\mathcal{N}(0, \sigma^{2} I)$ is the isotropic Gaussian distribution. We then calculate the following Monte-Carlo integration.

\vspace{-0.35cm}
{\small
\begin{equation} \label{eq:MC}
\hat{N}(W)(x) \approx 10^{-5}\sum_{j = 1}^{10^5}  N(W+\Delta_j)(x)
\end{equation}
}
\vspace{-0.5cm}

By using the standard error analysis of Monte-Carlo integration, the probability of the error of the above Monte-Carlo integral lying between the following range is equal to $\frac{1}{2\pi}\int_{-a}^a e^{-t^2/2}dt$.

\vspace{-0.35cm}
{\small
\begin{equation} \label{eq:Error}
\big(-a \mathbb{V}(N(W+\Delta)(x)\big)/\sqrt{10^5}, a \mathbb{V}(N(W+\Delta)(x))/\sqrt{10^5}\big)
\end{equation}
}
\vspace{-0.5cm}

where $\mathbb{V}(N(W+\Delta)(x))$ is the variance of $N(W+\Delta)(x)$ for $\Delta \sim \mathcal{N}(0, \sigma^{2} I)$. 

Since the classifier $|N(W)(x)|\le 1$ for any $W$ and $x$, $\mathbb{V}(N(W+\Delta)(x))\le 1$. Taking $a = 3$ as an example in Eq.\eqref{eq:Error}, we obtain that the probability of the error of the numerical integral in Eq.\eqref{eq:MC} being less than $0.01$ is about $99.7\%$. This implies that our input-agnostic certified group fairness technique has great potential to achieve the superior certified fairness guarantees for any input data, which is desirable in practice.

\vspace{-0.2cm}
\section{Experiments}\label{sec.experiment}
\vspace{-0.25cm}
In this section, we have evaluated the certified fairness of our \method\ model and other comparison methods over two benchmark datasets from the fair classification: Adult from the UCI Machine Learning Repository~\cite{UCI} and COMPAS introduced by the ProPublica~\cite{ProPublica}. Both datasets have binary labels and a mixture of numerical and categorical features. The Adult dataset contains 48,842 individuals with 18 binary features and a label indicating whether the income is greater than 50,000 USD or not. The COMPAS dataset consists of 6,172 individuals with 10 binary features and a label that takes value 1 if the individual does not reoffend and 0 otherwise. In both datasets, We use {\it Sex} and {\it Race} as the protected attributes respectively. The experiments exactly follow the same settings described by other group fairness methods with provable guarantees~\cite{ULP19,KJWC21,CHKV21,IKL21}. The datasets are preprocessed by using AIF360 toolkit~\cite{EDHH18}. 

{\bf Baselines.} We compare the \method\ model with ten state-of-the-art group fairness models, including 
six regular group fairness and three provable group fairness guarantee approaches.
{\bf ApxFair} is a classifier that are fair not only with respect to the training distribution, but also for a class of distributions that are weighted perturbations of the training samples~\cite{MDJW20}.
{\bf ARL} is an adversarially reweighted learning model that aims to improve the utility for worst-off protected groups, without access to protected features at training or inference time~\cite{LBCL20}.
{\bf Fair Mixup} is a data augmentation strategy that improves the generalization of group fairness metrics~\cite{ChMr21}.
{\bf FairBatch} adaptively selects minibatch sizes for the purpose of improving model fairness based on various prominent fairness measures~\cite{RLWS21a}.
{\bf fair-robust-selection (FRS)} is a sample selection-based algorithm for fair and robust training by solving a combinatorial optimization problem for the unbiased selection of samples~\cite{RLWS21b}.
{\bf Implicit} derives an implicit path alignment algorithm to learn a fair representation by encouraging the invariant optimal predictors on the top of data representation~\cite{Anon22c}.
{\bf Group-Fair} is a meta-algorithm for provable group fairness guarantees by approximately reducing classification problems with general types of fairness constraints to ones with convex constraints~\cite{CHKV19}.
{\bf FCRL} provides theoretical guarantees on the parity of downstream classifiers by limiting the mutual information between representations and protected attributes~\cite{GFDS21}.
{\bf DLR} is an optimization framework for learning a fair classifier with provable guarantees in the presence of noisy perturbations in the protected attributes.~\cite{CHKV21}.
To our best knowledge, this work is the first to certify the group fairness of classifiers with theoretical input-agnostic guarantees, while there is no need to know the shift between training and deployment datasets with respect to sensitive attributes.

{\bf Variants of \method\ model.} We evaluate three versions of \method\ to show the strengths of different techniques. By leveraging the theory of nonlinear functional analysis, \method\ utilizes our proposed Gaussian parameter smoothing method with parameter disparity term to produce a fair classifier with input-agnostic certified group fairness guarantees. \methodI\ makes use of the Gaussian parameter smoothing only, while \methodII\ employs the parameter disparity term only.

\begin{table}[t]\addtolength{\tabcolsep}{-4pt}
\caption{Accuracy and Fairness with {\it Sex} Attribute}
\vspace{-0.2cm}
\small
\begin{center}
\hspace{-0.8cm}
\begin{tabular}{l|ccc|ccc}
\hline {\bf Dataset} & \multicolumn{3}{|c} \textbf{Adult \qquad \ } & \multicolumn{3}{|c} \textbf{COMPAS \quad \ } \\
\hline {\bf Protected Attribute} & \multicolumn{3}{|c} \textbf{Sex \qquad \ \ \ } & \multicolumn{3}{|c} \textbf{Sex \qquad \ \ } \\
\hline {\bf Metric} & {\bf Acc.} & {\bf $\Delta_{DP}$} & {\bf $\Delta_{EO}$} & {\bf Acc.} & {\bf $\Delta_{DP}$} & {\bf $\Delta_{EO}$} \\
\hline ApxFair & 0.7 & 0.24 & 0.18 & 0.52 & 0.21 & 0.58 \\
ARL & {\bf 0.84} & 0.13 & 0.1 & 0.49 & 0.17 & 0.28 \\
Fair Mixup & 0.76 & {\bf 0.05} & 0.27 & 0.66 & 0.17 & 0.29 \\
FairBatch & {\bf 0.84} & 0.19 & 0.19 & 0.65 & 0.34 & 0.64 \\
FRS & 0.44 & 0.11 & 0.16 & 0.31 & 0.13 & {\bf 0.19} \\
Implicit & 0.8 & 0.11 & 0.15 & 0.64 & 0.17 & 0.3 \\
Group-Fair & 0.76 & 0.33 & 0.51 & 0.44 & 0.13 & 0.25 \\
FCRL & 0.81 & {\bf 0.05} & 0.22 & 0.56 & 0.19 & 0.39 \\
DLR & 0.5 & 0.13 & 0.22 & 0.49 & 0.28 & 0.46 \\
\hline \method\ & {\bf 0.84} & {\bf 0.05} & {\bf 0.08} & {\bf 0.67} & {\bf 0.11} & {\bf 0.19} \\
\hline
\end{tabular}
\label{tbl.Sex}
\end{center}
\vspace{-0.1cm}
\end{table}

{\bf Evaluation metrics.} We use two popular measures in fair machine learning to verify the fairness of different methods: {\bf Demographic Parity Disparity ($\Delta_{DP}$)} and {\bf Equalized Odds Disparity ($\Delta_{EO}$)}~\cite{CHS20,ChMr21,RLWS21,RLWS21,GFDS21,PCBZ21,Anon22b}. Demographic Parity defines fairness as an equal probability of being classified with the positive label, i.e., each group has the same probability of being classified with the positive outcome~\cite{CaHa20}. Equalized Odds defines fairness as an equal probability if the true positive rate and false positive rate of a classifier is the same across different groups~\cite{CaHa20}. $\Delta_{DP}$ or $\Delta_{EO}$ represent the difference of Demographic Parity or Equalized Odds between sensitive groups respectively. A smaller $\Delta_{DP}$ or $\Delta_{EO}$ score indicates a better fairness. In addition, we use {\bf Accuracy} to evaluate the quality of the fair classification algorithms.

{\bf Performance with {\it Sex} as protected attribute.} Table \ref{tbl.Sex} exhibits the $Accuracy$, $\Delta_{DP}$, and $\Delta_{EO}$ scores of ten fair classification algorithms over two datasets of Adult and COMPAS.  
It is observed that among ten fair classification methods our \method\ method achieve the highest $Accuracy$, the smallest $\Delta_{DP}$, and the lowest $\Delta_{EO}$ on two datasets in most experiments, showing the superior quality and fairness of \method\ against protected attributes. Compared to the fair classification results by other models, \method, on average, achieves 18.7\% $Accuracy$ boost, 48.2\% $\Delta_{DP}$ improvement, and 49.1\% $\Delta_{EO}$ decrease on two datasets. In addition, the promising performance of \method\ over both datasets implies that \method\ has great potential as a general fair classification solution to other datasets, which is desirable in practice.

\begin{figure}[t]
\mbox{
\hspace{-1.4cm}
\subfigure[Accuracy]{\epsfig{figure=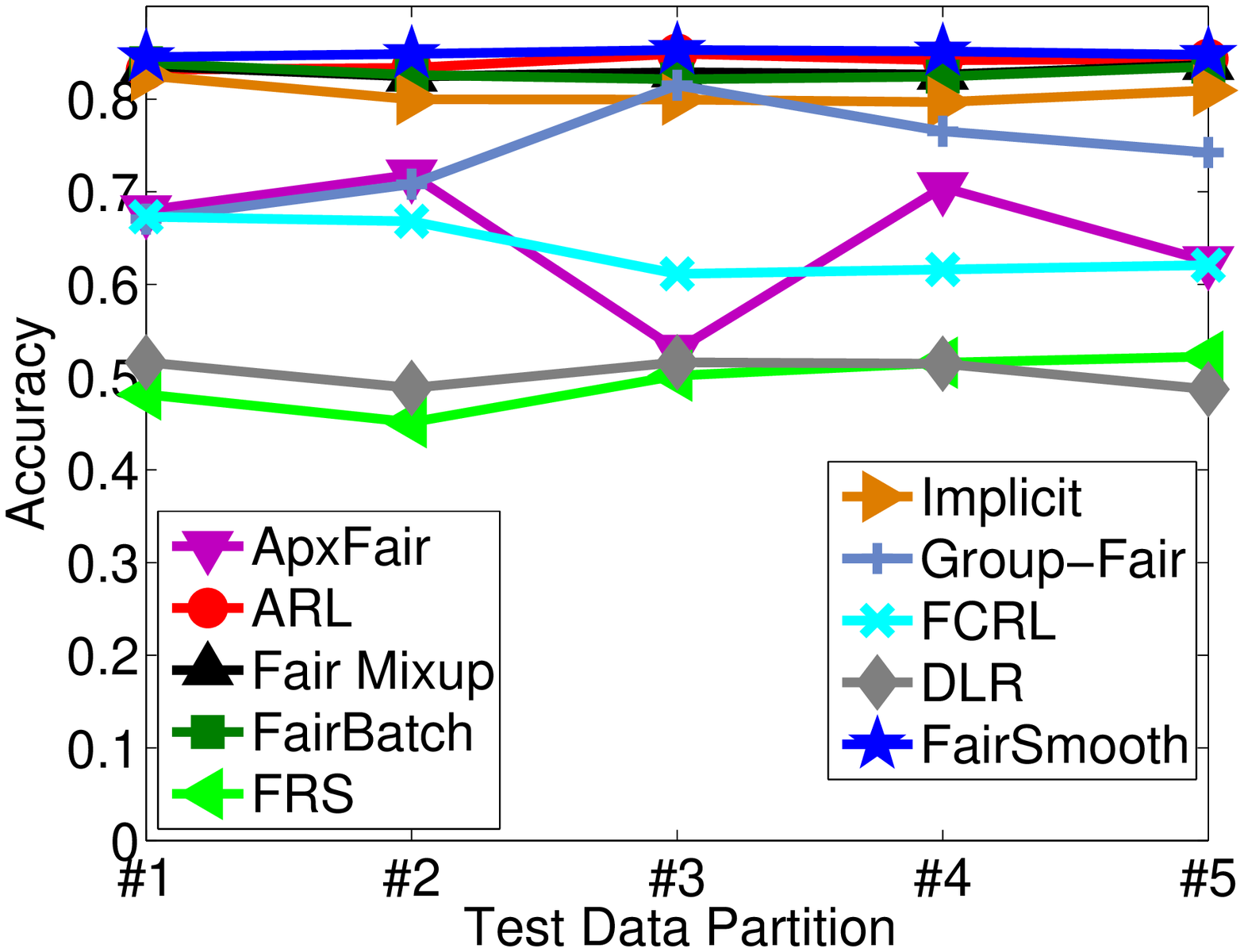, height=1.4in, width=0.395\linewidth}} \hspace{-0.225cm}
\subfigure[$\Delta_{DP}$]{\epsfig{figure=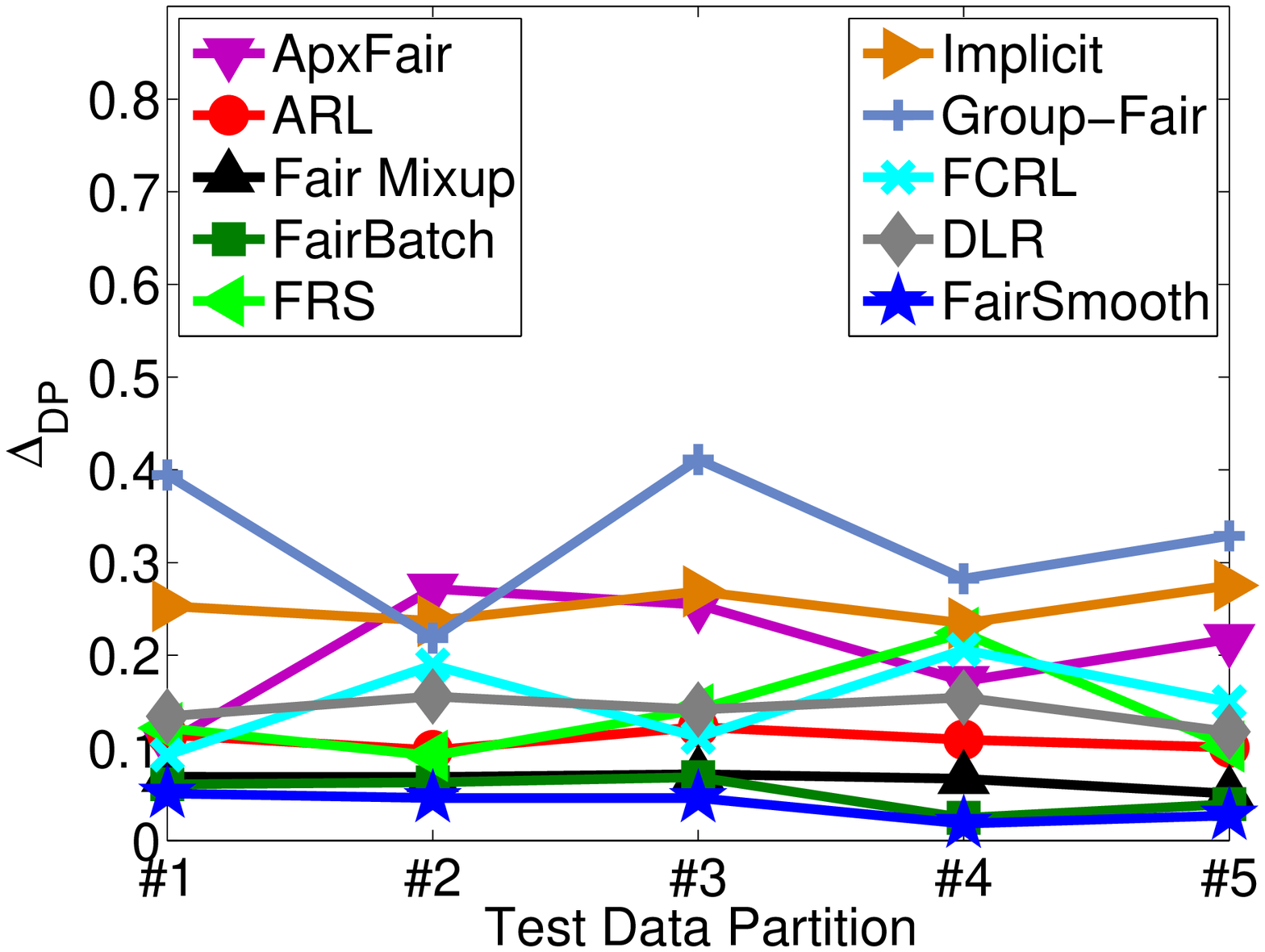, height=1.4in, width=0.395\linewidth}} \hspace{-0.225cm}
\subfigure[$\Delta_{EO}$]{\epsfig{figure=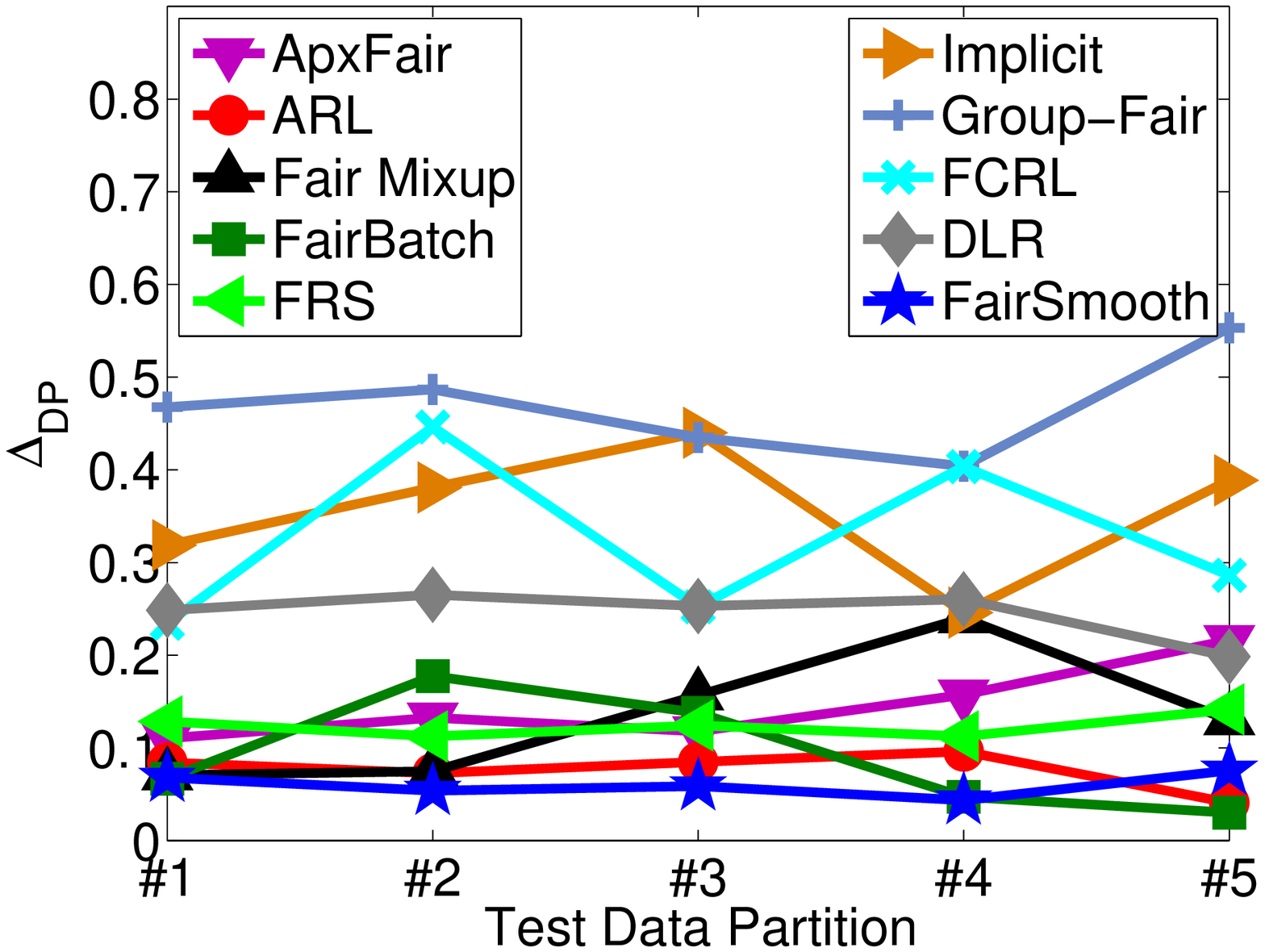, height=1.4in, width=0.395\linewidth}}}
\vspace{-0.2cm}
\caption{Performance with Varying Test Data Partitions on Adult}
\label{fig.DataPartition}
\vspace{-0.3cm}
\end{figure}

{\bf Performance with varying test data partitions on Adult.} Figures~\ref{fig.DataPartition} presents the fair classification performance by varying the distributions of test data on Adult. We randomly divide the test data into five partitions, which sample 5\%, 10\%, 15\%, 30\%, and 40\% of the entire test data respectively. Therefore, these five partitions have quite different distributions from each other. In addition, we allow the same samples to occur in multiple partitions.
It is obvious that the performance curves by other classification algorithms oscillate up and down with varying test data partitions. This phenomenon indicates that not only the Accuracy but also the Fairness of current fair classification methods are sensitive to test data distributions. However, the curves by our \method\ keep relatively stable. This demonstrates that our \method\ is able to produce the input-agnostic fair classification results, without the knowledge or assumption about the distribution of training and deployment data. 
In addition, \method\ still achieves the highest $Accuracy$ values ($>$ 0.83), the smallest $\Delta_{DP}$ scores ($<$ 0.11), and the lowest $\Delta_{EO}$ values ($<$ 0.10), which are better than other nine methods in most tests.

{\bf Ablation study.} Table \ref{tbl.Variant} presents the performance of fair classification on two datasets with three variants of our \method\ model. We observe the complete \method\ achieves the highest $Accuracy$ values ($>$ 0.66), the smallest $\Delta_{DP}$ scores ($<$ 0.05), and the lowest $\Delta_{EO}$ values ($<$ 0.08), which are obviously better than other versions. Compared with \methodI, \method\ performs well in most experiments. A reasonable explanation is that the parameter disparity term forces all individual classifiers to approach each other, which helps recover the most accurate model for each group as well as produce more fair overall classifier. In addition, \method\ achieves the better performance than \methodII. A rational guess is that the parameter disparity term only fails to provide a tight fairness guarantee without help of Gaussian parameter smoothing.

\begin{table}[t]\addtolength{\tabcolsep}{-4pt}
\caption{Accuracy and Fairness of \method\ variants}
\vspace{-0.2cm}
\begin{center}
\begin{tabular}{l|ccc|ccc}
\hline {\bf Dataset} & \multicolumn{3}{|c} \textbf{Adult \qquad \ } & \multicolumn{3}{|c} \textbf{COMPAS \quad \ } \\
\hline {\bf Protected Attribute} & \multicolumn{3}{|c} \textbf{Sex \qquad \ \ \ } & \multicolumn{3}{|c} \textbf{Race \qquad \ \ } \\
\hline {\bf Metric} & {\bf Acc.} & {\bf $\Delta_{DP}$} & {\bf $\Delta_{EO}$} & {\bf Acc.} & {\bf $\Delta_{DP}$} & {\bf $\Delta_{EO}$} \\
\hline \methodI\ & {\bf 0.84} & 0.2 & 0.22 & 0.59 & 0.15 & 0.25 \\
\methodII\ & 0.81 & 0.27 & 0.27 & 0.56 & 0.07 & 0.1 \\
\hline \method\ & {\bf 0.84} & {\bf 0.05} & {\bf 0.08} & {\bf 0.66} & {\bf 0.03} & {\bf 0.01} \\
\hline
\end{tabular}
\label{tbl.Variant}
\end{center}
\vspace{-0.1cm}
\end{table}

\begin{figure}[t]
\mbox{
\subfigure[Adult]{\epsfig{figure=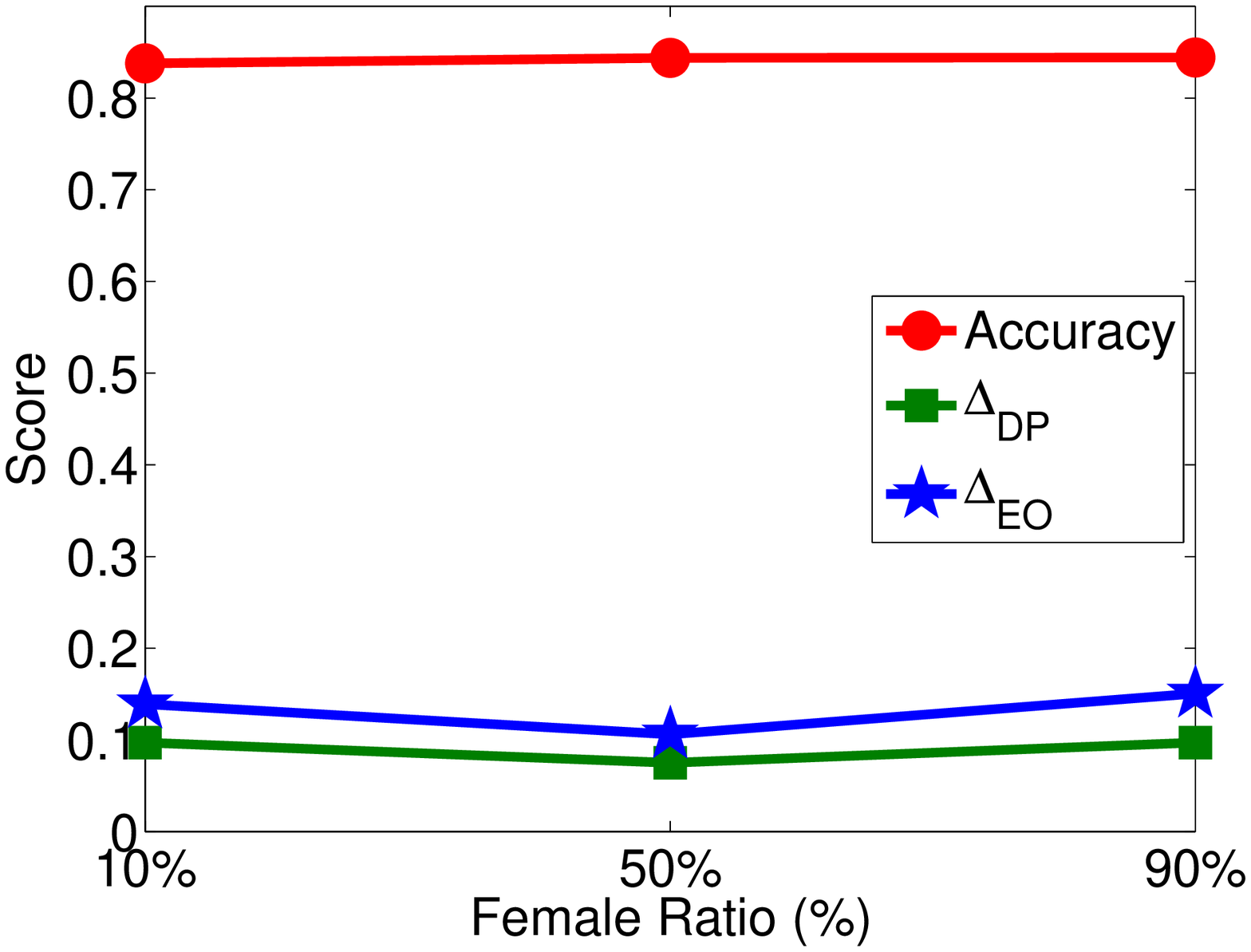, height=1.4in, width=0.495\linewidth}} \hspace{-0.225cm}
\subfigure[COMPAS]{\epsfig{figure=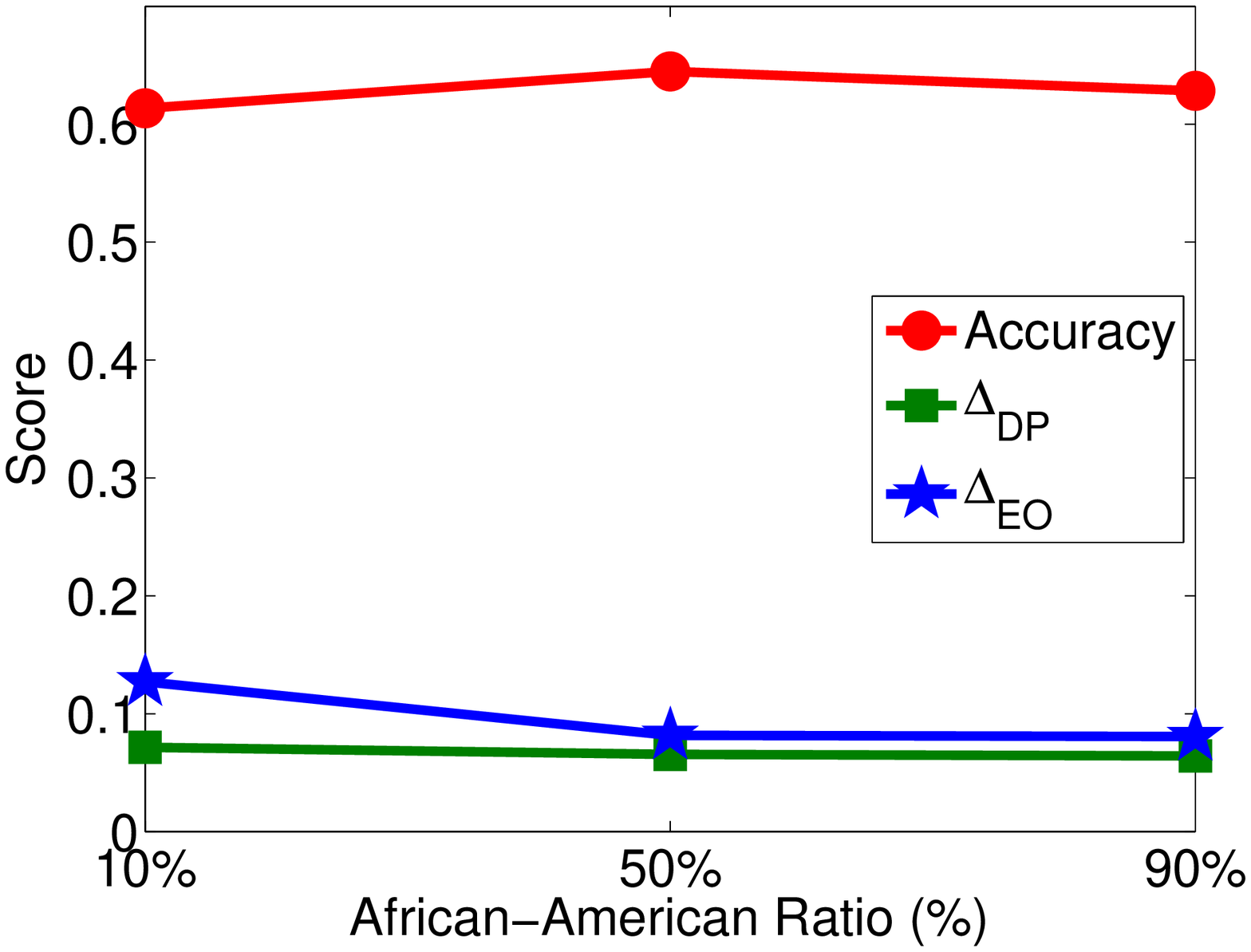, height=1.4in, width=0.495\linewidth}}}
\vspace{-0.2cm}
\caption{Performance with Varying Group Ratios}
\label{fig.GroupRatio}
\vspace{-0.3cm}
\end{figure}

{\bf Impact of group ratios.} Figure~\ref{fig.GroupRatio} shows the quality of ten fair classification algorithms on Adult (and COMPAS) with {\it Sex} (and {\it Race}) as protected attribute by varying the {\it Female} (and {\it African-American}) ratio of training data from 10\% to 90\% respectively. We randomly reduce the number of {\it Female} (or {\it African-American}) training samples for decreasing the ratio or reduce the number of {\it Male} (or {\it Other}) training samples for increasing the ratio.
We average metrics over 10 trials of randomly selected groups for imbalanced experiments. The manually imbalanced datasets are used to benchmark fair classification performance.
We make the following observations on the performances by our fair classification algorithm: (1) The accuracy (or fairness) curves initially keep increasing (or fairness) and then slightly drop (or raise) when the ratio increases; (2) Although we see that there is a little oscillation in two figures, the performance curves still keep relative stable. A reasonable explanation is that the group imbalance degree of training data definitely affects the performance of our fair classification method. However, our \method\ method is still robust to imbalanced training data with the help of Gaussian parameter smoothing and parameter disparity term.

{\bf Impact of $\sigma$ and $\alpha$ on Adult.} Figures~\ref{fig.Parameter} (a) and (b) show the impact of the standard deviation $\sigma$ in Gaussian distribution and the weight $\alpha$ of parameter disparity term in our \method\ model over the Adult dataset. The performance curves initially improve and finally become worse when two parameters increase. This demonstrates that there must exist an optimal $\sigma$ and $\alpha$ for the Gaussian parameter smoothing and the parameter disparity term. A too large $\sigma$ may introduce too much Gaussian noise and thus it is hard to guarantee the performance of Gaussian parameter smoothing, while a too small $\sigma$ may loss the strength of Gaussian parameter smoothing. Similarly, a too large $\alpha$ may make all individual classifiers become the same and thus fails to ensure the classification accuracy, while a too small $\alpha$ may make all individual classifiers far away from each other and thus fails to provide the fairness guarantees. Thus, it is important to choose the appropriate parameters for well balancing the accuracy and fairness.

{\bf Fairness certificate computation.} Table \ref{tbl.Distance} presents the average distance between the parameters of the overall and individual classifiers on Adult and COMPAS. We also measure the fairness certificates $\frac{(K-1)d}{\sqrt{2\pi}K\sigma}$ over two datasets. In the current experiments, the calculated average distances and fairness certificates on two datasets are very small with order $10^{-3}$ and $10^{-2}$ respectively. This implies that introducing the Gaussian parameter smoothing method and parameter disparity term to the training objective is able to help the overall classifier achieve high accuracy, which providing good fairness guarantees.

\begin{table}[t]\addtolength{\tabcolsep}{-4pt}
\caption{Fairness Certificate}
\vspace{-0.2cm}
\begin{center}
\begin{tabular}{l|c|c}
\hline {\bf Dataset} & \textbf{Adult} & \textbf{COMPAS} \\
\hline {\bf Average Distance $d$} & 0.0074 & 0.0057 \\
\hline {\bf Fairness Certificate $\frac{(K-1)d}{\sqrt{2\pi}K\sigma}$} & 0.0147 & 0.0114 \\
\hline
\end{tabular}
\label{tbl.Distance}
\end{center}
\vspace{-0.1cm}
\end{table}

\begin{figure}[t]
\mbox{
\subfigure[$\sigma$]{\epsfig{figure=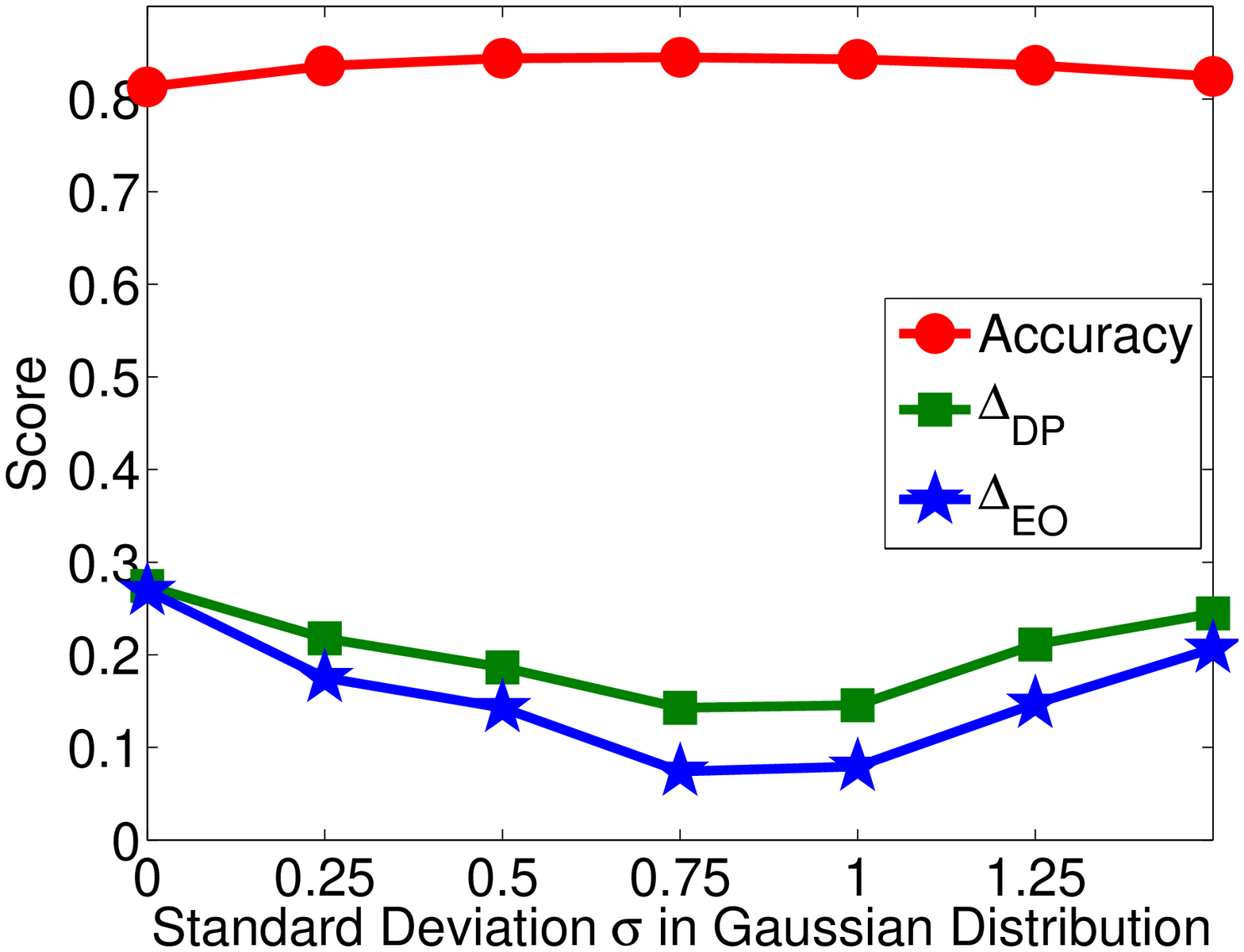, height=1.4in, width=0.495\linewidth}} \hspace{-0.225cm}
\subfigure[$\alpha$]{\epsfig{figure=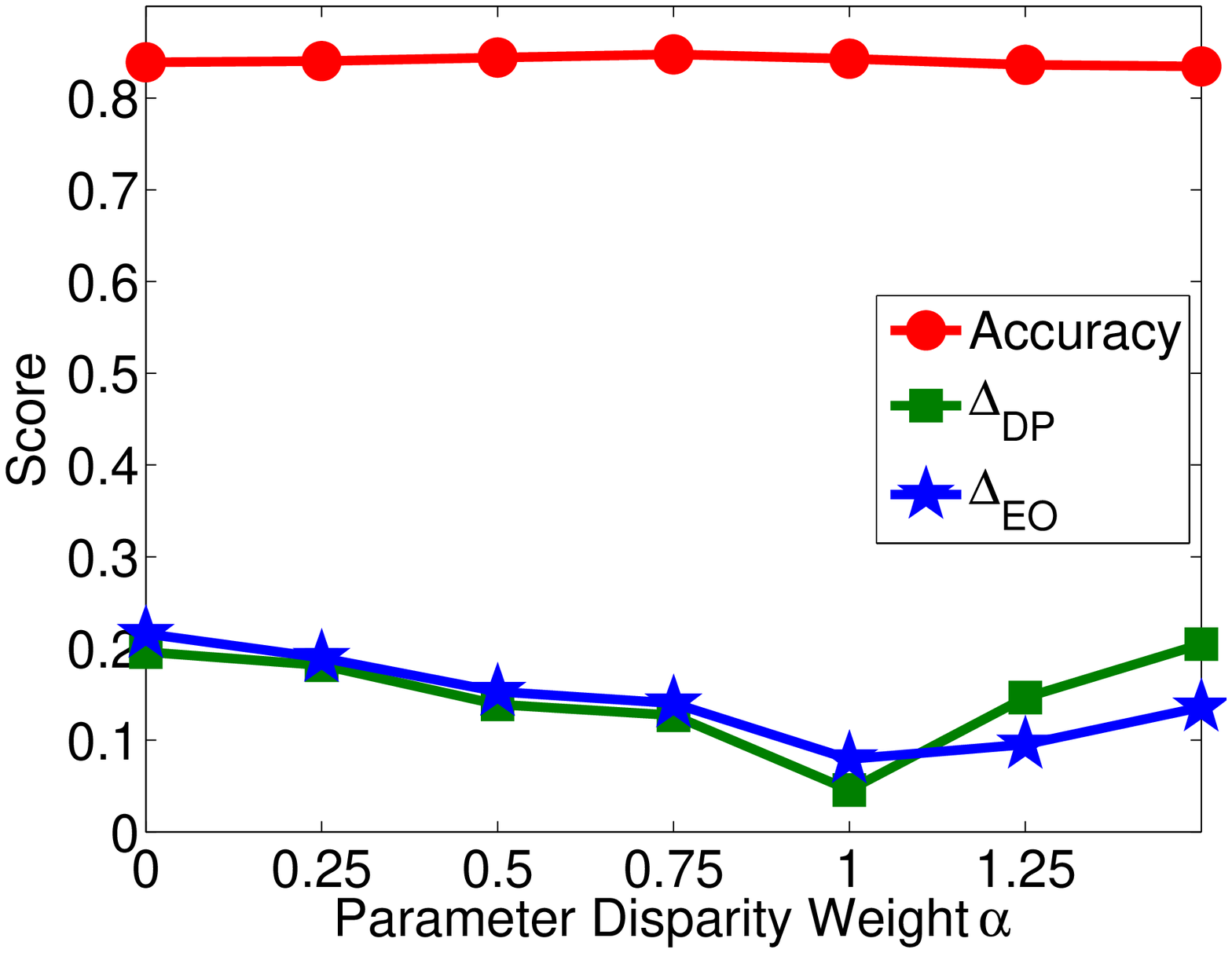, height=1.4in, width=0.495\linewidth}}}
\vspace{-0.2cm}
\caption{Performance with Varying Parameters on Adult}
\label{fig.Parameter}
\vspace{-0.3cm}
\end{figure}

\vspace{-0.15cm}
\section{Related Work}\label{sec.related}
\vspace{-0.1cm}
{\bf Fair Classification.} 
Trustworthy machine learning has attracted active research in recent years~\cite{PLZW18,LWCD19,ZRDJ20,ZZZS20,ZhLi19,WLZZ20b,ZZWS21,ZZZW21a,RZJZ21,ZRJZ21,MWDW21,ZZZW21}.
Fair classification techniques aim to guarantee the learnt classifiers that are not only accurate but also fair with respect to sensitive attributes~\cite{CaHa20,MMSL21}.
Existing techniques on fair classification can be broadly classified into three categories based on which stage of the machine learning pipeline they target: 
(1) Pre-processing techniques try to pre-process the original dataset for reducing bias present in the training data and achieve an unbiased dataset for learning~\cite{KaCa11,TRT11,ZWSP13,FFMS15,EdSt16,MOW17,KXPK18,MCPZ18,SKGZ19,CiKo19,GFDS21}. The pre-processing methods are beneficial when the onus of fairness is on a third party or the data controller~\cite{MOW17,CiKo19,GFDS21}. One of the main objectives of the pre-processing methods is to ensure that any downstream classifiers can freely use the sanitized datasets for fair machine learning~\cite{EdSt16,MOW17,SKGZ19,MCPZ18}. However, their operationalization often leads to an approach that may not ensure fairness guarantees~\cite{GFDS21};
(2) In-processing approaches aim to regularize the classifier during training to ensure the learning fairness (e.g. Zafar et al. (2017)). The in-processing algorithms can be categorized into two core mechanisms. 
Penalty-based methods add  constraints~\cite{HPS16,ZVGG17a,ZVGG17b,ZLM18,NaSh18,GYF18,MeWi18,Nara18,KRR18,CHKV19,CKLV19,NMS19,CJGW19} or a regularizer~\cite{KAAS12,GYF18,HFGK18,AAV19,HuVi19,JPSJ19,MDJW20,SHV20,ChMr21} to the learning objective that penalize or prevent parameter choices that lead to unfair decisions.
Adversarial learning-based methods train an adversary in parallel to the classifier that tries to predict the protected attribute from the model outputs~\cite{BCZC17,WVP18,ZLM18,LBCL20}. If this cannot be done better than chance level, the fairness is achieved.
However, many of the aforementioned algorithms do not come with theoretical fairness guarantees; perhaps because the manually-crafted heuristics are used to generate fair classifiers and the resulting optimization problem is non-convex;
(3) Post-processing algorithms formulate fair classification problems as constrained optimization problems: first train an unconstrained optimal classifier and then shift the classifiers' decision boundary according to the fairness requirement~\cite{FKL16,HPS16,GCGF16,WGOS17,PRWK17,DIKL18,CDHO20}. This is a simple, reliable and often effective method, but it requires the protected attribute to be available at prediction time.

{\bf Group Fairness with Provable Guarantees.} Group-based fair learning models aim to reduce the performance gaps between different protected groups with sensitive attributes~\cite{ZVGG17a,ZVGG17b,GYF18,KXPK18,MeWi18,MDJW20,WGNC20,YCK20,ZTLL20,LBCL20,PST21,CHKV21,GDL21,ChMr21,RLWS21,SGJ21,QPLH21,LiWa21,MMYS21,CMV21,DMWT21,RLWS21,Anon22a,ZLL22}.
However, most of existing studies fail to provide a provable fairness guarantee, because the manually-crafted heuristics are used to generate fair classifiers and the resulting optimization problems are non-convex~\cite{ZVGG17a,ZVGG17b,KXPK18}.
Only recently, researchers attempt to theoretically provide different classification algorithms with provable fairness guarantees~\cite{FFMS15,MOW17,CHKV19,ULP19,GFDS21,KJWC21,SAPB21,CHKV21,IKL21,Anon22b}.

Most of the above fairness guarantees are based on an assumption that the training and deployment data follow the same distribution. However, this assumption is false for many real-world problems~\cite{ZQDX21}. A recent study reports that the models based on the above assumption often violate these guarantees and exhibit unfair bias when evaluated on data from a different distribution~\cite{Anon22b}. Shifty is the first method to provide provable guarantees under a demographic shift between training and deployment data~\cite{Anon22b}. However, the fact that the old and new demographic proportions should be known in Shifty, which limits the applicability of Shifty in real-world scenarios.

To our best knowledge, this work is the first to certify the group fairness of classifiers with theoretical input-agnostic guarantees, while there is no need to know the shift between training and deployment datasets with respect to sensitive attributes, by leveraging the theory of nonlinear functional analysis, including Nemytskii operator and smooth manifold.

\vspace{-0.1cm}
\section{Conclusions}\label{sec.conclusions}
\vspace{-0.15cm}

In this work, we proposed an input-agnostic certified group fairness algorithm for improving the fairness of classification models while maintaining the remarkable prediction accuracy. First, a Gaussian parameter smoothing method is proposed to transform base classifiers into their smooth versions by averaging its prediction over Gaussian perturbations of the former's parameter within its neighborhood. 
Second, an overall smooth classifier for all groups is generated by averaging the parameters of all the optimal individual smooth classifiers.
Finally, the theoretical analysis is conducted to verify that the smooth classifier is able to achieve the certified fairness guarantees for any input data.
\vspace{-0.2cm}

%
%


\bibliographystyle{icml2022}

\newpage
\appendix
\onecolumn

\section{Appendix}\label{sec.appendix}

\subsection{Theoretical Proof}\label{sec.Proof}

\setcounter{lemma}{0}
\setcounter{theorem}{1}

\begin{lemma}\label{le:Frechet}
$\hat{N}$ is Fr\'echet differentiable. Therefore, $\hat{N}(\mathbb{R}^m)\subset L^p$ is a smooth manifold. 
\end{lemma}

\begin{proof}
Recall the definition of Fr\'echet derivative in Definition \ref{def:Frechet}. Generally speaking, the proof of an operator being Fr\'echet differentiable consists of two steps: 1) Find a candidate $P(x)$ for the Fr\'echet derivative; 2) Show that $L(x)$ satisfies the limit condition in the definition. We first show $\hat N(W)$ is Fr\'echet differentiable. Without loss of generality, we prove it for $\sigma =1$. 
By the change of variable, we rewrite $\hat\phi(W)$ as 

\begin{equation}
\hat{N}(W)(\cdot)= \frac{1}{(2\pi)^{m/2}}\int_{\mathbb{R}^m}N(\Delta)(\cdot)e^{-\frac{\|W-\Delta\|^2_2}{2}}d\Delta
\end{equation}

We define a linear operator $P(W)(\delta): \mathbb{R}^m \rightarrow L^p$ by 

\begin{equation}
P(W)(\delta)(\cdot)= -\frac{1}{(2\pi)^{m/2}}\int_{\mathbb{R}^m}N(\Delta)(\cdot)e^{-\frac{\|W - \Delta\|^2_2}{2}}(W-\Delta)\cdot\delta d\Delta
\end{equation}

where $P(W)$ is the candidate for the Fr\'echet derivative of $\hat N(W)$. 

Next, we show that $P(W)$ is indeed the Fr\'echet derivative. We compute 
\begin{equation}\label{S1}
\begin{split}
&\|\hat{N}(W+\delta)(\cdot) - \hat{N}(W)(\cdot) - P(W)\delta(\cdot)\|_{L^p(\Omega)}\\
= &\Big\|\frac{1}{(2\pi)^{m/2}}\int_{\mathbb{R}^m}N(\Delta)(\cdot)\left(e^{-\frac{\|W+\delta-\Delta\|^2_2}{2}} - e^{-\frac{\|W-\Delta\|^2_2}{2}} + e^{-\frac{\|W - \Delta\|^2_2}{2}}(W-\Delta)\cdot\delta \right)d\Delta\Big\|_{L^p(\Omega)}
\end{split}
\end{equation}
By direct calculations, we have
\begin{equation}\label{exp-diff}
\frac{d}{ds}e^{-\frac{\|W+s\delta-\Delta\|^2_2}{2}} =- e^{-\frac{\|W+s\delta-\Delta\|^2_2}{2}}(W+s\delta-\Delta)\cdot\delta
\end{equation}

By integrating the above identity from $0$ to $1$, we obtain
\begin{equation}\label{exp-diff}
e^{-\frac{\|W+\delta-\Delta\|^2_2}{2}} - e^{-\frac{\|W-\Delta\|^2_2}{2}} = -\int_0^1 e^{-\frac{\|W+s\delta-\Delta\|^2_2}{2}}(W+s\delta-\Delta)\cdot\delta ds
\end{equation}

By plugging Eq.\eqref{exp-diff} into Eq.\eqref{S1}, we have
\begin{equation}\label{S2}
\begin{split}
&\|\hat{N}(W+\delta)(\cdot) - \hat{N}(W)(\cdot) - P(W)\delta(\cdot)\|_{L^p(\Omega)}\\
= &\Big\|\frac{1}{(2\pi)^{m/2}}\int_{\mathbb{R}^m}N(\Delta)(\cdot)\left(\int_0^1 e^{-\frac{\|W+s\delta-\Delta\|^2_2}{2}}(W+s\delta-\Delta)\cdot\delta - e^{-\frac{\|W - \Delta\|^2_2}{2}}(W-\Delta)\cdot\delta  ds\right)d\Delta\Big\|_{L^p(\Omega)}\\
= &\Big\|\frac{1}{(2\pi)^{m/2}}\int_{\mathbb{R}^m}N(\Delta)(\cdot)\left(\int_0^1( e^{-\frac{\|W+s\delta-\Delta\|^2_2}{2}} - e^{-\frac{\|W - \Delta\|^2_2}{2}})(W-\Delta)\cdot\delta + e^{-\frac{\|W +s\delta- \Delta\|^2_2}{2}}s\delta^2  ds\right)d\Delta\Big\|_{L^p(\Omega)}
\end{split}
\end{equation}

Based on the Fubini's Theorem, we derive that 
\begin{equation}
\begin{split}
&\|\hat{N}(W+\delta)(\cdot) - \hat{N}(W)(\cdot) - P(W)\delta(\cdot)\|_{L^p(\Omega)}\\
= &\Big\|\frac{1}{(2\pi)^{m/2}}\int_0^1\left(\int_{\mathbb{R}^m}N(\Delta)(\cdot)( e^{-\frac{\|W+s\delta-\Delta\|^2_2}{2}} - e^{-\frac{\|W - \Delta\|^2_2}{2}})(W-\Delta)\cdot\delta + e^{-\frac{\|W +s\delta- \Delta\|^2_2}{2}}s\delta^2  d\Delta\right)ds\Big\|_{L^p(\Omega)}
\end{split}
\end{equation}

It is clear that 
\begin{equation}
\frac{1}{(2\pi)^{m/2}}\int_{\mathbb{R}^m} e^{-\frac{\|W +s\delta- \Delta\|^2_2}{2}}\delta^2  d\Delta = \frac{1}{(2\pi)^{m/2}}\int_{\mathbb{R}^m} e^{-\frac{\| \Delta\|^2_2}{2}}\delta^2  d\Delta = \|\delta\|_2^2
\end{equation}

Notice that $|N(\Delta)(x)|\le 1$ for any $\Delta$ and $x$. Therefore, 
\begin{equation}\label{o1}
\Big\|\frac{1}{(2\pi)^{m/2}}\int_0^1\int_{\mathbb{R}^m} N(\Delta)(\cdot)e^{-\frac{\|W +s\delta- \Delta\|^2_2}{2}}s\delta^2  d\Delta ds\Big\|_{L^p(\Omega)} \le \|1\|_{L^p(\Omega)} \|\delta\|_2^2
\end{equation}

Moreover, similar to Eq.\eqref{exp-diff}, one can check that
\begin{equation}\label{exp-diff2}
e^{-\frac{\|W+s\delta-\Delta\|^2_2}{2}} - e^{-\frac{\|W-\Delta\|^2_2}{2}} =- \int_0^1 e^{-\frac{\|W+st\delta-\Delta\|^2_2}{2}}(W+s\delta-\Delta)\cdot\delta dt
\end{equation}

It follows that
\begin{equation}
\begin{split}
&\Big\|\frac{1}{(2\pi)^{m/2}}\int_0^1\int_{\mathbb{R}^m}N(\Delta)(\cdot)( e^{-\frac{\|W+s\delta-\Delta\|^2_2}{2}} - e^{-\frac{\|W - \Delta\|^2_2}{2}})(W-\Delta)\cdot\delta d\Delta ds \Big\|_{L^p(\Omega)}\\
= &\Big\|\frac{1}{(2\pi)^{m/2}}\int_0^1\int_0^1\int_{\mathbb{R}^m}N(\Delta)(\cdot)e^{-\frac{\|W+st\delta-\Delta\|^2_2}{2}}(W+s\delta-\Delta)\cdot\delta(W-\Delta)\cdot\delta d\Delta ds dt \Big\|_{L^p(\Omega)}\\
\le & \Big\|\frac{1}{(2\pi)^{m/2}}\int_0^1\int_0^1\int_{\mathbb{R}^m}e^{-\frac{\|W+st\delta-\Delta\|^2_2}{2}}\|W+s\delta-\Delta\|_2\|W-\Delta\|_2\|\delta\|^2 d\Delta ds dt \Big\|_{L^p(\Omega)}\\
\end{split}
\end{equation}

Based on Dominant Convergence Theorem, as $\delta \rightarrow 0$, we have

\begin{equation}
\int_{\mathbb{R}^m}e^{-\frac{\|W+st\delta-\Delta\|^2_2}{2}}\|W+s\delta-\Delta\|_2\|W-\Delta\|_2\|\delta\|^2 d\Delta \rightarrow \int_{\mathbb{R}^m}e^{-\frac{\|W-\Delta\|^2_2}{2}}\|W-\Delta\|^2_2 d\Delta
\end{equation}

Therefore, as $\delta \rightarrow 0$, we have
\begin{equation}\label{o2}
\begin{split}
&\Big\|\frac{1}{(2\pi)^{m/2}}\int_0^1\int_{\mathbb{R}^m}N(\Delta)(\cdot)( e^{-\frac{\|W+s\delta-\Delta\|^2_2}{2}} - e^{-\frac{\|W - \Delta\|^2_2}{2}})(W-\Delta)\cdot\delta d\Delta ds \Big\|_{L^p(\Omega)}\\
\le & \Big\|\frac{1}{(2\pi)^{m/2}}\int_{\mathbb{R}^m}e^{-\frac{\|W'\|^2_2}{2}}\|W'\|^2_2 dW'\Big\|_{L^p(\Omega)}\|\delta\|^2
\end{split}
\end{equation}

Combining Eq.\eqref{o1} and Eq.\eqref{o2}, we obtain 
\begin{equation}
\hat{N}(W+\delta)(\cdot) - \hat{N}(W)(\cdot) - P(W)\delta(\cdot)\|_{L^p(\Omega)} = O(\delta^2)
\end{equation}

This implies that $P(W)$ is Fr\'echet derivative of $\hat N$. By similar arguments, one can show that $\hat N(W)$ is infinitely Fr\'echet differentiable. Note that $\hat N (W)$ can be viewed as a global chart for the manifold $\hat N(\mathbb{R}^m)$. Therefore, $\hat N(\mathbb{R}^m)$ is a smooth manifold.
\end{proof}

\begin{lemma}\label{le:Nemytskii1}
For any $W_1, W_2 \in \mathbb{R}^m$ satisfying $\|W_1 - W_2\|_2 \leq d$, it holds the that

\vspace{-0.35cm}
\begin{equation} \label{eq:Nemytskii1}
\begin{split}
&|\Omega|^{-1}\|\hat{N}(W_1)(x) - \hat{N}(W_2)(x)\|_{L^p(\Omega)} \le \frac{d}{\sqrt{2\pi}\sigma}\\
& \text{if}\; 1\le p<\infty, \\
&\|\hat{N}(W_1)(x) - \hat{N}(W_2)(x)\|_{L^\infty(\Omega)} \le \frac{d}{\sqrt{2\pi}\sigma}
\end{split}
\end{equation}
\vspace{-0.3cm}

for any $\Omega\subset\mathbb{R}^n$.
\end{lemma}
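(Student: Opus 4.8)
The plan is to reduce everything to a single pointwise estimate: for each fixed input $x$,
\[
|\hat{N}(W_1)(x) - \hat{N}(W_2)(x)| \le \frac{\|W_1 - W_2\|_2}{\sqrt{2\pi}\,\sigma}.
\]
Granting this, the $L^\infty(\Omega)$ inequality is immediate, and for $1\le p<\infty$ one gets $\|\hat{N}(W_1)-\hat{N}(W_2)\|_{L^p(\Omega)}\le \frac{d}{\sqrt{2\pi}\sigma}\,|\Omega|^{1/p}$, so dividing by $|\Omega|\ge 1$ and using $p\ge1$ yields the stated normalized bound for every $\Omega\subset\mathbb{R}^n$. Thus the whole content is the input-wise Lipschitz estimate in the parameter.

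For that estimate I would differentiate under the integral, which is legitimate by Lemma~\ref{le:Frechet}: freezing $x$, the map $W\mapsto\hat{N}(W)(x)$ is smooth with gradient
\[
\nabla_W \hat{N}(W)(x) = -\frac{1}{\sigma^2}\,\mathbb{E}_{\Delta\sim\mathcal{N}(W,\sigma^2 I)}\big[N(\Delta)(x)\,(W-\Delta)\big].
\]
Since $\mathbb{E}_{\Delta\sim\mathcal{N}(W,\sigma^2 I)}[W-\Delta]=0$, I may recenter: for any constant $c$, $\nabla_W\hat{N}(W)(x)=-\tfrac{1}{\sigma^2}\mathbb{E}[(N(\Delta)(x)-c)(W-\Delta)]$. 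Taking $c=\tfrac12$ and using that the base classifier outputs lie in $[0,1]$, so $|N(\Delta)(x)-\tfrac12|\le\tfrac12$, I bound, for any unit vector $u$,
\[
|u\cdot\nabla_W\hat{N}(W)(x)|\le \frac{1}{2\sigma^2}\,\mathbb{E}\big[|(W-\Delta)\cdot u|\big]=\frac{1}{2\sigma^2}\cdot\sigma\sqrt{\tfrac{2}{\pi}}=\frac{1}{\sqrt{2\pi}\,\sigma},
\]
because $(W-\Delta)\cdot u\sim\mathcal{N}(0,\sigma^2)$ and $\mathbb{E}|\mathcal{N}(0,1)|=\sqrt{2/\pi}$. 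Hence $\|\nabla_W\hat{N}(W)(x)\|_2\le\frac{1}{\sqrt{2\pi}\sigma}$ uniformly in $W$ and $x$, and the mean value inequality along the segment from $W_2$ to $W_1$ gives the pointwise estimate, with $\|W_1-W_2\|_2\le d$.

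An essentially equivalent route avoids differentiation: writing $\hat{N}(W)(x)=\int N(W')(x)\,g_W(W')\,dW'$ with $g_W$ the density of $\mathcal{N}(W,\sigma^2 I)$, the difference equals $\int (N(W')(x)-\tfrac12)(g_{W_1}-g_{W_2})\,dW'$ (again using $\int(g_{W_1}-g_{W_2})=0$), which is bounded by $\tfrac12\int|g_{W_1}-g_{W_2}|=d_{\mathrm{TV}}(\mathcal{N}(W_1,\sigma^2 I),\mathcal{N}(W_2,\sigma^2 I))$; by rotational symmetry this equals $2\Phi(\tfrac{\mu}{2\sigma})-1$ with $\mu=\|W_1-W_2\|_2$, and the elementary inequality $2\Phi(t)-1\le\sqrt{2/\pi}\,t$ (both sides vanish at $t=0$ and $\tfrac{d}{dt}(2\Phi(t)-1)=2\phi(t)\le\sqrt{2/\pi}$) delivers the same bound.

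The one genuinely delicate point is the constant. A naive estimate using only $|N|\le1$ gives $\frac{2}{\sqrt{2\pi}\sigma}=\sqrt{2/\pi}/\sigma$, off by a factor of two; obtaining the sharp $\frac{1}{\sqrt{2\pi}\sigma}$ forces one to exploit that $W-\Delta$ is centered under $\mathcal{N}(W,\sigma^2 I)$ so that $N$ can be re-centered to the midpoint of its range. That is the step I would justify most carefully. Everything else — differentiating under the integral, the Gaussian first absolute moment, and the mean value inequality in $\mathbb{R}^m$ — is routine.
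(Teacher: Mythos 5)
Your proposal is correct, and at its core it runs on the same mechanism as the paper's proof: both reduce the $L^p$ and $L^\infty$ claims to a Lipschitz-in-$W$ estimate for the Gaussian-smoothed classifier, obtained by moving the difference (or derivative) onto the Gaussian kernel and invoking the first absolute moment of a one-dimensional Gaussian, $\mathbb{E}|\mathcal{N}(0,\sigma^2)| = \sigma\sqrt{2/\pi}$. The paper integrates $\frac{d}{ds}e^{-\|W_2-\Delta+s(W_1-W_2)\|_2^2/2}$ along the segment, applies Fubini, bounds $|N(\Delta)(x)|\le 1$, and rotates coordinates to evaluate $\int_{\mathbb{R}^m}e^{-\|\Delta\|_2^2/2}\,|\Delta\cdot u|\,d\Delta = 2(2\pi)^{(m-1)/2}$ --- which is your gradient/mean-value argument written out by hand.

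The genuinely different (and better) ingredient in your write-up is the re-centering of $N$ by $c=\tfrac12$, exploiting that $\mathbb{E}_{\Delta\sim\mathcal{N}(W,\sigma^2 I)}[W-\Delta]=0$. You are right that this is the delicate point: the paper does \emph{not} perform this step, and its own final evaluation is off by exactly the factor of two you predict. Normalizing $2(2\pi)^{(m-1)/2}$ by $(2\pi)^{-m/2}$ gives $2/\sqrt{2\pi}=\sqrt{2/\pi}$, yet the paper's next display records the result as $1/\sqrt{2\pi}$; so the argument as printed only supports the weaker bound $\sqrt{2/\pi}\,d/\sigma$, and your re-centering (or equivalently your total-variation route via $2\Phi(t)-1\le\sqrt{2/\pi}\,t$) is what actually certifies the stated constant $\frac{d}{\sqrt{2\pi}\sigma}$. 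Your proof is therefore not merely an alternative but a repair. One small caveat you share with the paper: the passage from $\|\cdot\|_{L^p(\Omega)}\le M|\Omega|^{1/p}$ to $|\Omega|^{-1}\|\cdot\|_{L^p(\Omega)}\le M$ needs $|\Omega|\ge 1$, which is fine under the paper's counting-measure reading of $|\Omega|$ as a cardinality but should be stated.
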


\begin{proof}
Since $\mathcal{N}(0,\sigma^2I) = \sigma^{-m}\mathcal{N}(0,I)$, it suffices to prove the lemma for $\sigma = 1$. Rewrite $\hat\phi(W)$ as 

\begin{equation}
\hat{N}(W)(\cdot)= \frac{1}{(2\pi)^{m/2}}\int_{\mathbb{R}^m}N(\Delta)(\cdot)e^{-\frac{\|W-\Delta\|^2_2}{2}}d\Delta
\end{equation}

By direct calculations, it holds that 
\begin{equation}\label{Lip1}
\begin{split}
&\|\hat{N}(W_1)(\cdot) - \hat{N}(W_2)(\cdot) \|_{L^p(\Omega)}\\
= &\Big\|\frac{1}{(2\pi)^{m/2}}\int_{\mathbb{R}^m}N(\Delta)(\cdot)\left(e^{-\frac{\|W_1-\Delta\|^2_2}{2}} - e^{-\frac{\|W_2-\Delta\|^2_2}{2}} \right)d\Delta\Big\|_{L^p(\Omega)}
\end{split}
\end{equation}

One can check that

\begin{equation}
\frac{d}{ds}e^{-\frac{\|W_2-\Delta+s(W_1 - W_2)\|^2_2}{2}} =- e^{-\frac{\|W_2-\Delta+s(W_1 - W_2)\|^2_2}{2}}(W_2-\Delta+s(W_1 - W_2))\cdot (W_1 -W_2)
\end{equation}

By integrating the above identity from $0$ to $1$, we obtain

\begin{equation}\label{Lip-exp-diff}
\begin{split}
e^{-\frac{\|W_1-\Delta\|^2_2}{2}} - e^{-\frac{\|W_2 - \Delta\|^2_2}{2}} = -&\int_0^1 e^{-\frac{\|W_2 - \Delta +s(W_1 - W_2)\|^2_2}{2}}(W_2 - \Delta+s(W_1 - W_2))\cdot (W_1 -W_2) ds
\end{split}
\end{equation}

By plugging Eq.\eqref{Lip-exp-diff} into Eq.\eqref{Lip1}, we have
\begin{equation}
\begin{split}
&\|\hat{N}(W_1)(\cdot) - \hat{N}(W_2)(\cdot) \|_{L^p(\Omega)}\\
= &\Big\|\frac{1}{(2\pi)^{m/2}}\int_{\mathbb{R}^m}N(\Delta)(\cdot)\left(\int_0^1 e^{-\frac{\|W_2-\Delta+s(W_1 - W_2)\|^2_2}{2}}(W_2-\Delta+s(W_1 - W_2))\cdot (W_1 -W_2)ds\right)d\Delta\Big\|_{L^p(\Omega)}\\
= &\Big\|\frac{1}{(2\pi)^{m/2}}\int_{\mathbb{R}^m}\int_0^1 N(\Delta)(\cdot)e^{-\frac{\|W_2 - \Delta +s(W_1 - W_2)\|^2_2}{2}}(W_2 - \Delta +s(W_1 - W_2))\cdot (W_1 -W_2)dsd\Delta\Big\|_{L^p(\Omega)}
\end{split}
\end{equation}

By applying Fubini's Theorem to the above inequality, we have
\begin{equation}
\begin{split}
&\|\hat{N}(W_1)(\cdot) - \hat{N}(W_2)(\cdot) \|_{L^p(\Omega)}\\
= &\Big\|\frac{1}{(2\pi)^{m/2}}\int_0^1\int_{\mathbb{R}^m} N(\Delta)(\cdot)e^{-\frac{\|W_2 - \Delta +s(W_1 - W_2)\|^2_2}{2}}(W_2 - \Delta +s(W_1 - W_2))\cdot (W_1 -W_2)d\Delta ds\Big\|_{L^p(\Omega)}
\end{split}
\end{equation}
For convenience, we let $\widetilde{W} = W_2  +s(W_1 - W_2)$ and $\bar W = W_1 - W_2$. Then, we rewrite
\begin{equation}\label{Lip2}
\begin{split}
&\|\hat{N}(W_1)(\cdot) - \hat{N}(W_2)(\cdot) \|_{L^p(\Omega)}\\
= &\Big\|\frac{1}{(2\pi)^{m/2}}\int_0^1\int_{\mathbb{R}^m} N(\Delta)(\cdot)e^{\frac{-\|\widetilde{W} - \Delta\|^2_2}{2}}(\widetilde{W} - \Delta)\cdot \bar W d\Delta ds\Big\|_{L^p(\Omega)}\\
= & \Big\|\frac{1}{(2\pi)^{m/2}}\int_0^1\int_{\mathbb{R}^m} N(\Delta)(\cdot)e^{\frac{-\|\widetilde{W} - \Delta\|^2_2}{2}}(\widetilde{W} - \Delta)\cdot \frac{\bar W}{\|\bar W\|_2} d\Delta ds\Big\|_{L^p(\Omega)}\|\bar W\|_2\\
\end{split}
\end{equation}

Notice that $0\le N(W)(x)\le1$ for any $W$ and $x$ and $\frac{\bar W}{\|\bar W\|_2}$ is a unit vector. By using standard Gaussian integrals, for any unit vector $u$, we obtain 

\begin{equation}
\begin{split}
& \Big\|\frac{1}{(2\pi)^{m/2}}\int_0^1\int_{\mathbb{R}^m} N(\Delta)(\cdot)e^{\frac{-\|\widetilde{W} - \Delta\|^2_2}{2}}(\widetilde{W} - \Delta)\cdot u d\Delta ds\Big\|_{L^p(\Omega)}\\
\le & \Big\|\frac{1}{(2\pi)^{m/2}}\int_0^1\int_{\mathbb{R}^m} e^{\frac{-\|\widetilde{W} - \Delta\|^2_2}{2}}|(\widetilde{W} - \Delta)\cdot u| d\Delta ds\Big\|_{L^p(\Omega)}\\
= &  \Big\|\frac{1}{(2\pi)^{m/2}}\int_0^1\int_{\mathbb{R}^m} e^{\frac{-\|\Delta\|^2_2}{2}}|\Delta\cdot u| d\Delta ds\Big\|_{L^p(\Omega)}\\
 \end{split}
\end{equation}

We rotate the coordinate system such that the unit vector $u$ coincides with any coordinate axis. Let $u, v_1,\cdots, v_{m-1}$ be the standard basis of the rotated coordinate system. In other words, for any $\Delta \in \mathbb{R}^m$, we decompose $\Delta$ into

\begin{equation}
\Delta = (\Delta\cdot u) u +\sum_{l=1}^{m-1}(\Delta\cdot v_{m-1})v_{m-1}
\end{equation}

Since the Jacobian of rotation is $1$, by utilizing standard Gaussian integrals, one can check that
\begin{equation}
\begin{split}
\int_{\mathbb{R}^m} e^{\frac{-\|\Delta\|^2_2}{2}}|\Delta\cdot u| d\Delta
= &\int_{\mathbb{R}^m} e^{\frac{-\sum_{l=1}^{m-1}|\Delta\cdot v_l|^2_2}{2}}e^{\frac{-|\Delta\cdot u|^2_2}{2}}|\Delta\cdot u| d\Delta\\
= &\int_{\mathbb{R}^m} e^{\frac{-\sum_{l=1}^{m-1}\tau_l^2}{2}}e^{\frac{-|\tau_m|^2}{2}}d\tau_1\cdots d\tau_m = 2(2\pi)^{\frac{m-1}{2}}
\end{split}
\end{equation}

Consequently, we have 
\begin{equation}
\Big\|\frac{1}{(2\pi)^{m/2}}\int_0^1\int_{\mathbb{R}^m} N(\Delta)(\cdot)e^{\frac{-\|\widetilde{W} - \Delta\|^2_2}{2}}(\widetilde{W} - \Delta)\cdot u d\Delta ds\Big\|_{L^p(\Omega)} = \| \frac{1}{(2\pi)^{1/2}}\|_{L^p(\Omega)}
\end{equation}

Therefore, it follows that for $1\le p <\infty$,
\begin{equation}
\|\hat{N}(W_1)(\cdot) - \hat{N}(W_2)(\cdot) \|_{L^p(\Omega)} \le  \frac{1}{(2\pi)^{1/2}}|\Omega| \|W_1-W_2\|_2
\end{equation}
\begin{equation}
\|\hat{N}(W_1)(\cdot) - \hat{N}(W_2)(\cdot) \|_{L^\infty(\Omega)} \le  \frac{1}{(2\pi)^{1/2}}\|W_1-W_2\|_2
\end{equation}
The proof is finished. 
\end{proof}

\begin{theorem}\label{th:CertifyA}
Let $\hat{N}(W^\ast_k)$ be the optimal individual classifier for group $G_k$ in the hypothesis space $\hat{\mathcal{H}}$ $(k = 1,2,\cdots,K)$. Let $W^\ast = \frac{W^\ast_1+\cdots+W^\ast_K}{K}$ and $\hat{N}(W^\ast)$ be the overall fair classifier. If $\underset{1\leq k,l \leq K}{\max}{\|W_k - W_l\|_2} = d$, then for any $\Omega\subset\mathbb{R}^n$, it holds that 

\vspace{-0.35cm}
\begin{equation} \label{eq:Nemytskii1}
\begin{split}
&|\Omega_k|^{-1}\|\hat{N}(W^\ast)(x) - \hat{N}(W^\ast_k)(x)\|_{L^p(\Omega)} \le \frac{(K-1)d}{\sqrt{2\pi}K\sigma} \\
&\text{if}\;1\le p <\infty, \\
&\|\hat{N}(W^\ast)(x) - \hat{N}(W^\ast_k)(x)\|_{L^\infty(\Omega_k)} \le \frac{(K-1)d}{\sqrt{2\pi}K\sigma}
\end{split}
\end{equation}
\vspace{-0.3cm}

Therefore, $\hat{N}(W^\ast)(x)$ has certified $\frac{(K-1)d}{\sqrt{2\pi}K\sigma}$-fairness in $L^p$ for any $1\le p\le \infty$.
\end{theorem}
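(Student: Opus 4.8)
The plan is to deduce the theorem directly from Lemma~\ref{le:Nemytskii1}, so that the entire content reduces to controlling the distance $\|W^\ast - W^\ast_k\|_2$ between the averaged parameter and each individual parameter. First I would fix $k$ and rewrite
\begin{equation}
W^\ast - W^\ast_k = \frac{1}{K}\sum_{l=1}^{K} W^\ast_l - W^\ast_k = \frac{1}{K}\sum_{l=1}^{K}\bigl(W^\ast_l - W^\ast_k\bigr) = \frac{1}{K}\sum_{l\neq k}\bigl(W^\ast_l - W^\ast_k\bigr),
\end{equation}
where the last equality holds because the $l=k$ term vanishes. Applying the triangle inequality to the remaining $K-1$ summands and using the hypothesis $\max_{1\le i,j\le K}\|W^\ast_i - W^\ast_j\|_2 = d$ gives $\|W^\ast - W^\ast_k\|_2 \le \frac{1}{K}\sum_{l\neq k}\|W^\ast_l - W^\ast_k\|_2 \le \frac{(K-1)d}{K}$.

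With this estimate in hand, the second step is simply to invoke Lemma~\ref{le:Nemytskii1} with the pair $(W_1,W_2) = (W^\ast, W^\ast_k)$ and with the set denoted $\Omega$ there taken to be $\Omega_k = \{x\in\Omega : z = z_k\}$ (legitimate, since $\Omega_k\subset\mathbb{R}^n$). For $1\le p<\infty$ this yields $|\Omega_k|^{-1}\|\hat{N}(W^\ast)(x) - \hat{N}(W^\ast_k)(x)\|_{L^p(\Omega_k)} \le \frac{\|W^\ast - W^\ast_k\|_2}{\sqrt{2\pi}\sigma} \le \frac{(K-1)d}{\sqrt{2\pi}K\sigma}$, and the $L^\infty$ case is identical using the second inequality of Lemma~\ref{le:Nemytskii1}. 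Since these bounds hold for every $k = 1,\dots,K$ and for an \emph{arbitrary} $\Omega\subset\mathbb{R}^n$, the definition of certified $\epsilon$-fairness in $\hat{\mathcal{H}}$ is satisfied with $\epsilon = \frac{(K-1)d}{\sqrt{2\pi}K\sigma}$, which is the stated conclusion.

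I do not expect a genuine obstacle here: the global, input-independent Lipschitz bound of Lemma~\ref{le:Nemytskii1} does all the heavy lifting, and the only real computation is the averaging inequality above. The one place that warrants a little care is bookkeeping with the diameter constant — the hypothesis is written as $\max\|W_k - W_l\|_2 = d$ while everything else is phrased through the optimal parameters $W^\ast_k$, so I would state explicitly that $d$ denotes the pairwise diameter of $\{W^\ast_1,\dots,W^\ast_K\}$. It is also worth remarking that the factor $\frac{K-1}{K}$ is the best constant obtainable from the triangle inequality alone: it is attained when the $W^\ast_l$ with $l\neq k$ all coincide at distance $d$ from $W^\ast_k$ along a common ray, in which case $\|W^\ast - W^\ast_k\|_2 = \frac{(K-1)d}{K}$ exactly.
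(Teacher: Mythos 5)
Your proposal is correct and follows essentially the same route as the paper's own proof: decompose $W^\ast - W^\ast_k = \frac{1}{K}\sum_{l\neq k}(W^\ast_l - W^\ast_k)$, bound its norm by $\frac{(K-1)d}{K}$ via the triangle inequality, and invoke Lemma~\ref{le:Nemytskii1}. Your added remarks on the $W_k$ versus $W^\ast_k$ notation in the hypothesis and on the sharpness of the $\frac{K-1}{K}$ factor are correct but not needed for the argument.
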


\begin{proof}
By direct calculations, we have that for any $k = 1,\cdots K$, $W^\ast - W^\ast_k = \frac{\sum_{l\ne k} W^\ast - W^\ast_k}{K}$, which implies that $\|W^\ast - W^\ast_k\|\le \frac{(K-1)d}{K}$. Then the desired results follows directly from Lemma \ref{le:Nemytskii1}. 
\end{proof}

\begin{lemma}\label{le:Balance}
For any $x\in \mathbb{R}^n$, $\hat{N}(W)(x)\rightarrow N(W)(x)$ as $\sigma\rightarrow 0$. 
\end{lemma}

\begin{proof}
Through a change of variable $\Delta = \sigma \tau$, we rewrite 

\begin{equation}
\hat{N}(W)(x) = \frac{1}{(2\pi)^{m/2}}\int_{\mathbb{R}^m}N(W+\sigma\tau)(x)e^{-\frac{\|\tau|^2_2}{2}}d\tau
\end{equation}

It is clear that $|N(W+\sigma\tau)(x)e^{-\frac{\|\tau|^2_2}{2}}|\le e^{-\frac{\|\tau|^2_2}{2}}$ and $ e^{-\frac{\|\tau|^2_2}{2}}$ is integrable. Moreover, by the continuity of $N(W+\sigma\tau)(x)$, one has $N(W+\sigma\tau)(x)\rightarrow N(W)(x)$ as $\sigma\rightarrow 0$. Therefore, by Dominated Convergence Theorem, we obtain that as $\sigma \rightarrow 0$, it holds 

\begin{equation}
\frac{1}{(2\pi)^{m/2}}\int_{\mathbb{R}^m}N(W+\sigma\tau)(x)e^{-\frac{\|\tau|^2_2}{2}}d\tau\rightarrow \frac{1}{(2\pi)^{m/2}}\int_{\mathbb{R}^m}N(W)(x)e^{-\frac{\|\tau|^2_2}{2}}d\tau
\end{equation}

This completes the proof.
\end{proof}

\subsection{Additional Experiments}\label{sec.AdditionalExperiments}

In this section, we conduct more experiments to validate the accuracy and fairness of our proposed \method method and evaluate the sensitivity of various parameters in our input-agnostic certified group fairness framework for the fair classification task.

\begin{table}[t]\addtolength{\tabcolsep}{-4pt}
\caption{Accuracy and Fairness with {\it Race} Attribute}
\vspace{-0.2cm}
\begin{center}
\begin{tabular}{l|ccc|ccc}
\hline {\bf Dataset} & \multicolumn{3}{|c} \textbf{Adult \qquad \ } & \multicolumn{3}{|c} \textbf{COMPAS \quad \ } \\
\hline {\bf Protected Attribute} & \multicolumn{3}{|c} \textbf{Race \qquad \ \ } & \multicolumn{3}{|c} \textbf{Race \qquad \ \ } \\
\hline {\bf Metric} & {\bf Acc.} & {\bf $\Delta_{DP}$} & {\bf $\Delta_{EO}$} & {\bf Acc.} & {\bf $\Delta_{DP}$} & {\bf $\Delta_{EO}$} \\
\hline ApxFair & 0.72 & 0.21 & 1.19 & 0.6 & 0.17 & 0.22 \\
ARL & {\bf 0.84} & 0.18 & 0.19 & 0.5 & 0.05 & 0.17 \\
Fair Mixup & 0.83 & 0.09 & 0.14 & 0.65 & {\bf 0.03} & 0.05 \\
FairBatch & {\bf 0.84} & 0.15 & 0.35 & 0.65 & 0.31 & 0.59 \\
FRS & 0.65 & 0.15 & 0.17 & 0.31 & 0.04 & 0.13 \\
Implicit & 0.67 & 0.18 & 0.2 & 0.6 & 0.11 & 0.3 \\
Group-Fair & 0.73 & 0.12 & {\bf 0.12} & 0.49 & 0.10 & 0.15 \\
FCRL & 0.81 & 0.12 & 0.16 & 0.49 & 0.08 & 0.16 \\
DLR & 0.76 & 0.09 & 0.18 & 0.41 & 0.12 & 0.21 \\
\hline \method\ & {\bf 0.84} & {\bf 0.04} & 0.13 & {\bf 0.66} & {\bf 0.03} & {\bf 0.01} \\
\hline
\end{tabular}
\label{tbl.Race}
\end{center}
\vspace{-0.1cm}
\end{table}

{\bf Performance with {\it Race} as protected attribute.} Table \ref{tbl.Race} presents the $Accuracy$, $\Delta_{DP}$, and $\Delta_{EO}$ scores of ten fair classification algorithms over two datasets of Adult and COMPAS.
Similar trends are observed for the classification performance comparison: \method\ achieves the highest $Accuracy$ values ($>$ 0.66), the smallest $\Delta_{DP}$ scores ($<$ 0.04), and the lowest $\Delta_{EO}$ values ($<$ 0.15) respectively, which are obviously better than all other methods. Especially, as shown the experiment on Adult, compared to the best competitors among ten group fairness algorithms, the $\Delta_{DP}$ and $\Delta_{EO}$ scores achieved by \method\ averagely achieves 10.0\% and 36.4\% improvement respectively. This demonstrates that the integration of the Gaussian parameter smoothing for providing a tight fairness guarantee and the parameter disparity term for helping recover the most accurate model for each group is able to make the classification results achieved by our input-agnostic certified group fairness method be accurate as well as fair.

\begin{figure}[t]
\mbox{
\subfigure[Accuracy]{\epsfig{figure=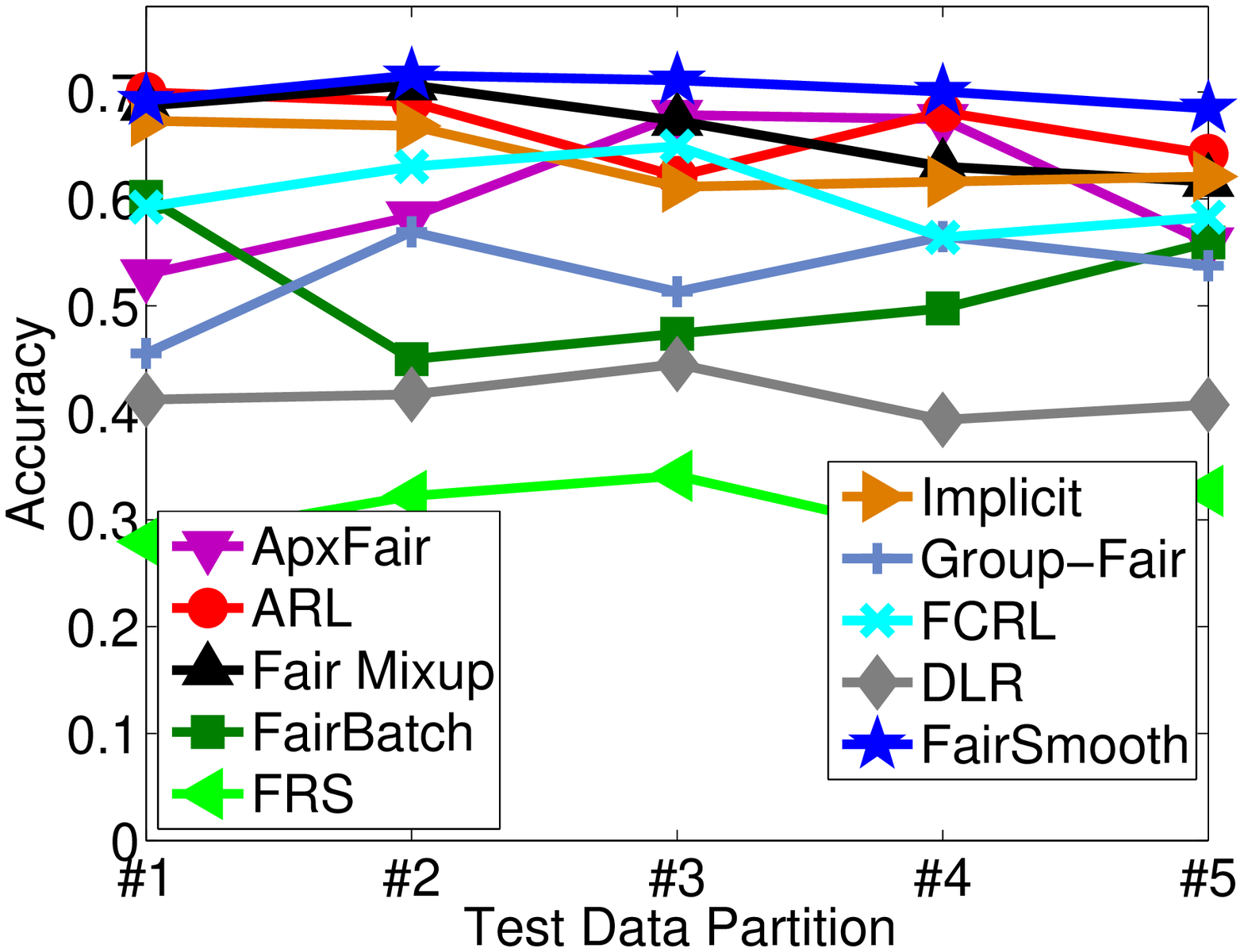, height=2in, width=0.33\linewidth}} \hspace{-0.225cm}
\subfigure[$\Delta_{DP}$]{\epsfig{figure=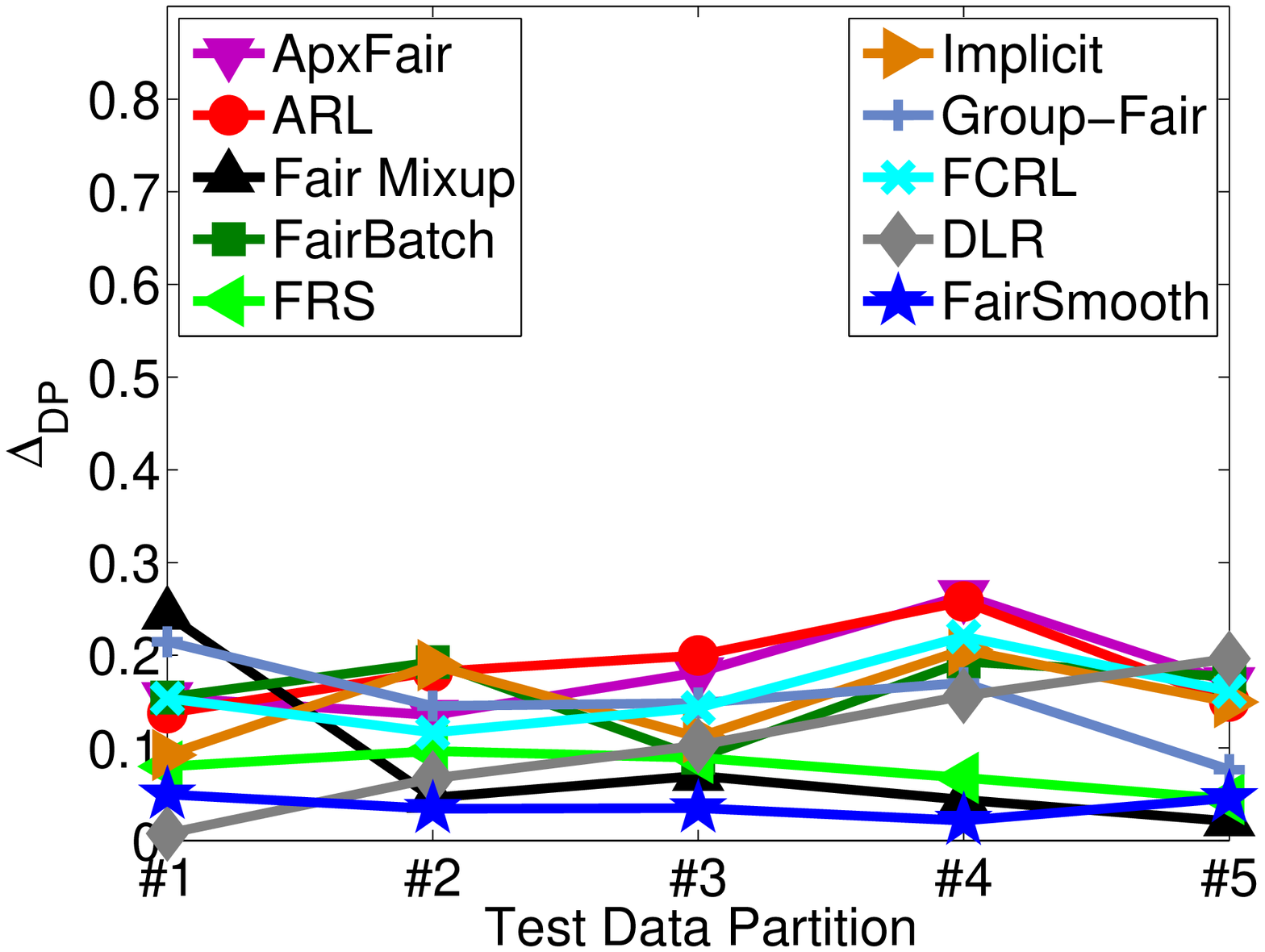, height=2in, width=0.33\linewidth}} \hspace{-0.225cm}
\subfigure[$\Delta_{EO}$]{\epsfig{figure=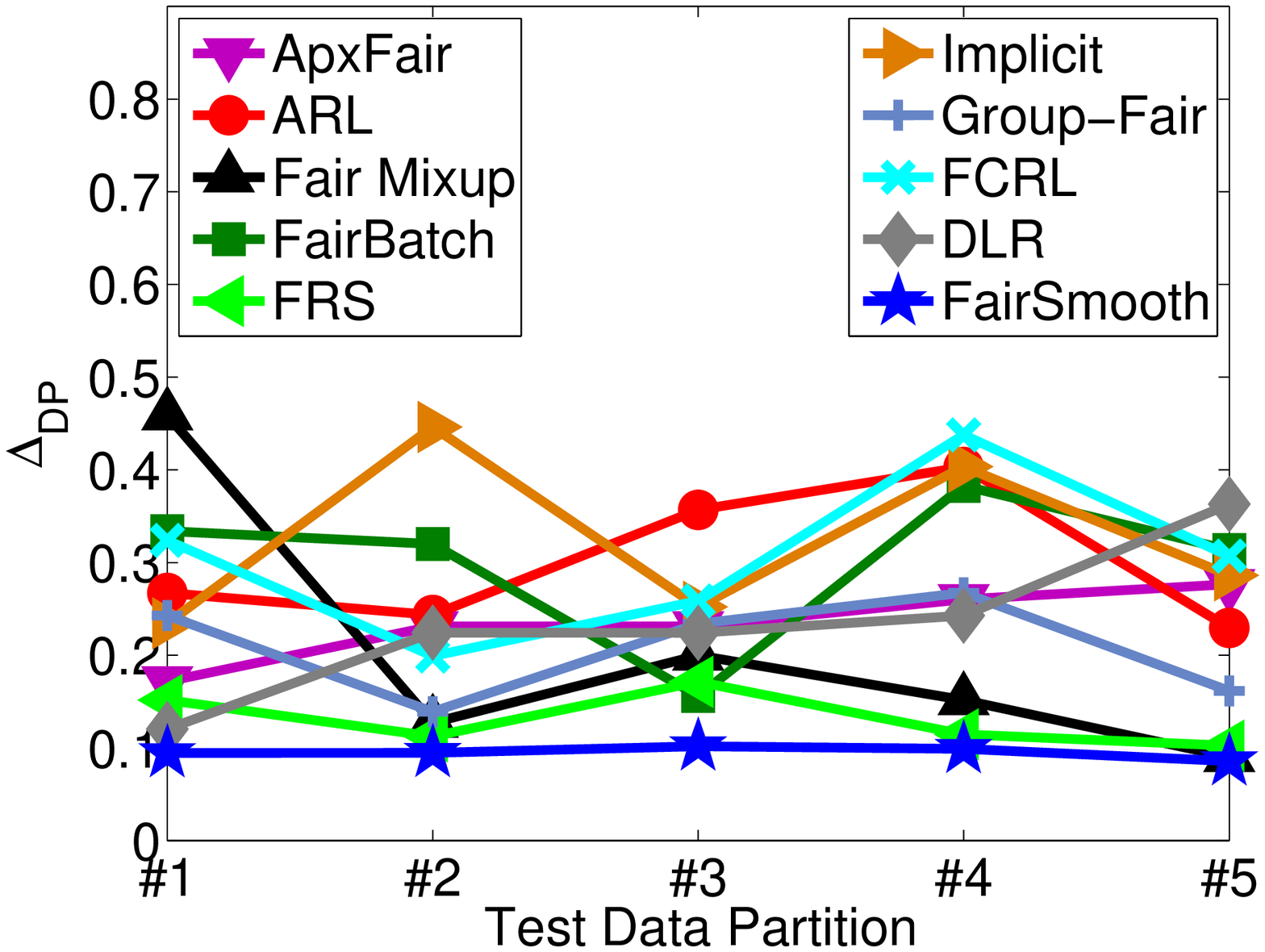, height=2in, width=0.33\linewidth}}}
\vspace{-0.2cm}
\caption{Performance with Varying Test Data Partitions on COMPAS}
\label{fig.DataPartition1}
\vspace{-0.3cm}
\end{figure}

{\bf Performance with varying test data partitions on COMPAS.} Figure~\ref{fig.DataPartition1} exhibits the fair classification performance with different test data on Adult. We have observed similar trends: the performance (both accuracy and fairness) achieved by our \method\ is insensitive to test data distributions. This is because the smooth classifiers are reformulated as output functions of a Nemytskii operator, which induces a Fr\'echet differentiable smooth manifold. The smooth manifold has a global Lipschitz constant that is independent of the domain of the input data, which derives the input-agnostic certified group fairness. In addition, \method\ still outperforms other fair classification methods in most experiments. The exceptional performance of \method\ over different datasets implies that our input-agnostic certified group fairness technique has great potential to achieve the superior certified fairness guarantees for any input data, which is desirable in practice.

\begin{figure}[t]
\mbox{
\hspace{1.4cm}
\subfigure[$\sigma$]{\epsfig{figure=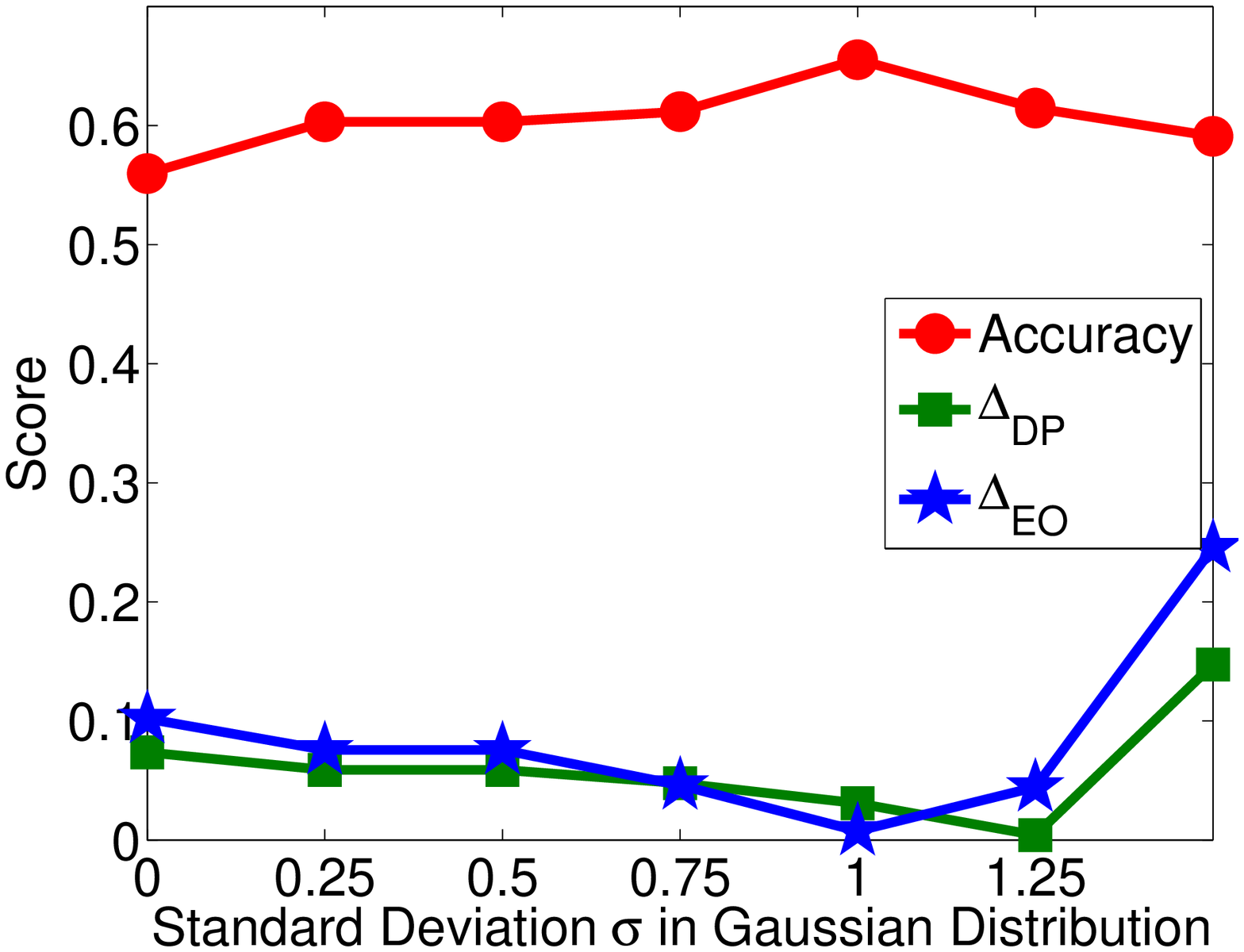, height=2in, width=0.395\linewidth}} 
\subfigure[$\alpha$]{\epsfig{figure=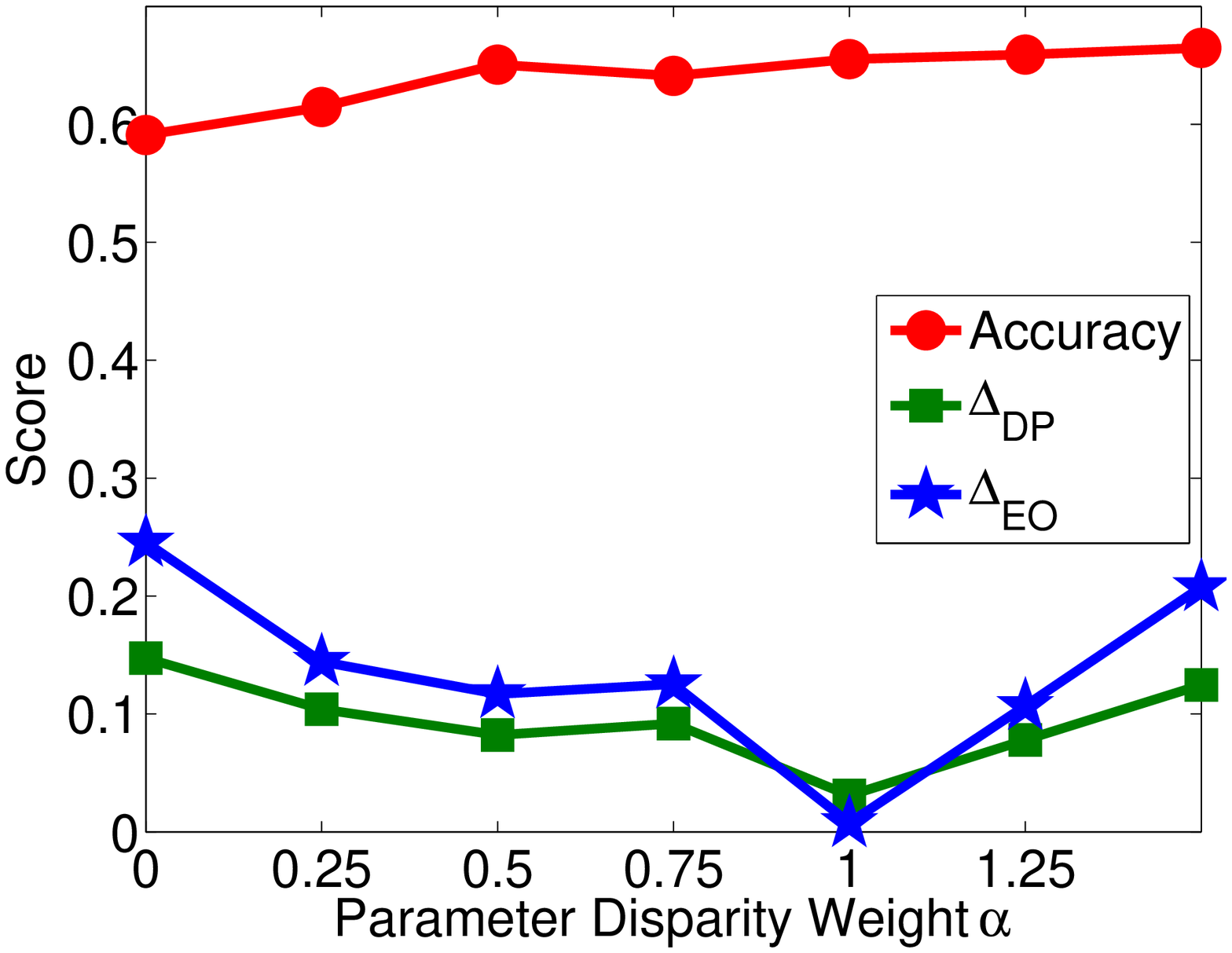, height=2in, width=0.395\linewidth}}}
\vspace{-0.2cm}
\caption{Performance with Varying Parameters on COMPAS}
\label{fig.Parameter1}
\vspace{-0.3cm}
\end{figure}

{\bf Impact of $\sigma$ and $\alpha$ on COMPAS.} Figure~\ref{fig.Parameter1} presents the influence of the standard deviation $\sigma$ in Gaussian distribution and the weight $\alpha$ of parameter disparity term in our \method\ model over the COMPAS dataset. We have witnessed the accuracy (or fairness) curves initially increase (or decrease) quickly and then become stable or even drop (or raise) when the parameters continuously increase. Initially, a large $\sigma$ can help leverage the strength of Gaussian parameter smoothing to produce a fair classifier. Later on, when $\sigma$ continues to increase and goes beyond some thresholds, too much Gaussian noise may ruin the prediction by the smooth classifier. In addition, $\alpha$ serves as a tradeoff hyperparameter to well balance the fairness and accuracy. Therefore, choosing a suitable parameter is important for the fair classifiers.

\vspace{-0.15cm}
\subsection{Experimental Details}\label{sec.ExperimentDetails}
\vspace{-0.1cm}

{\bf Environment.} Our experiments were conducted on a compute server running on Red Hat Enterprise Linux 7.2 with 2 CPUs of Intel Xeon E5-2650 v4 (at 2.66 GHz) and 8 GPUs of NVIDIA GeForce GTX 2080 Ti (with 11GB of GDDR6 on a 352-bit memory bus and memory bandwidth in the neighborhood of 620GB/s), 256GB of RAM, and 1TB of HDD. Overall, our experiments took about 2 days in a shared resource setting. We expect that a consumer-grade single-GPU machine (e.g., with a 1080 Ti GPU) could complete our full set of experiments in around tens of hours, if its full resources were dedicated.
The codes were implemented in Python 3.7.3 and PyTorch 1.0.14. We also employ Numpy 1.16.4 and Scipy 1.3.0 in the implementation. Since the datasets used are all public datasets and the hyperparameter settings are explicitly described, our experiments can be easily reproduced on top of a GPU server. We promise to release our open-source codes on GitHub and maintain a project website with detailed documentation for long-term access by other researchers and end-users after the paper is accepted.

{\bf Datasets.} The COMPAS dataset was introduced by ProPublica. It contains data from the US criminal justice system and was obtained by a public records request. The dataset contains personal information. To mitigate negative side effects, we delete the name, first, last and dob (date of birth) entries from the dataset before processing it further. We then exclude entries that do not fit the problem setting of predicting two year recidivism, following the steps of the original analysis~\cite{ProPublica}. Specifically, this means keeping only cases from Broward county, Florida, for which data has been entered within 30 days of the arrest. Traffic offenses and cases with insufficient information are also excluded. This steps leave 6,172 examples out of the original 7,214 cases.
The Adult dataset is available in the UCI data repository as well as multiple other online sources~\cite{UCI}. We use the dataset in unmodified form, except for binning some of the feature values.

{\bf Implementation.}
For six regular group fairness models of
ApxFair~\footnote{https://github.com/essdeee/Ensuring-Fairness-Beyond-the-Training-Data}, 
ARL~\footnote{https://github.com/google-research/google-research/tree/master/group\_agnostic\_fairness}, 
Fair Mixup~\footnote{https://github.com/chingyaoc/fair-mixup}, 
FairBatch~\footnote{https://github.com/yuji-roh/fairbatch},
fair-robust-selection (FRS)~\footnote{https://github.com/yuji-roh/fair-robust-selection}, and 
Implicit~\footnote{https://openreview.net/forum?id=pkh8bwJbUbL}, we used the open-source implementation and default parameter settings by the original authors for our experiments.
For three provable group fairness guarantee approaches of
Group-Fair~\footnote{https://github.com/vijaykeswani/FairClassification}, 
FCRL~\footnote{https://github.com/umgupta/fairness-via-contrastive-estimation}, and 
DLR~\footnote{https://github.com/vijaykeswani/Noisy-Fair-Classification}, 
we also utilized the same model architecture as the official implementation provided by the original authors and used the same datasets to validate the fairness of these fair classification models in all experiments. All hyperparameters are standard values from reference codes or prior works. The above open-source codes from the GitHub are licensed under the MIT License, which only requires preservation of copyright and license notices and includes the permissions of commercial use, modification, distribution, and private use.

For our proposed input-agnostic certified group fairness algorithm, we performed hyperparameter selection by performing a parameter sweep on sample numbers $N \in \{500, 1,000, 5,000, 10,000, 50,000, 100,000\}$ in the integral calculation, parameter disparity weight $\alpha \in \{0.25, 0.5, 0.75, 1, 1.25, 1.5\}$ in the \methodI\ model, training epochs of the smooth classifiers $\in \{10, 20, 40, 80, 160, 320\}$, batch size for training the smooth classifiers $\in \{16, 32, 48, 64, 128, 256\}$, and learning rate $\in \{0.0005, 0.001, 0.005, 0.01, 0.05, 0.1\}$. In this paper we smooth the base classifier with Gaussian parameter augmentation (i.e., perturbations) at variance $\sigma^2$. We select $\sigma$ from a range $\sigma \in \{0.25, 0.5, 1, 1.25, 1.5, 1.75, 2\}$. We select the best parameters over 50 epochs of training and evaluate the model at test time. 
After the hyperparameter selection, the model was trained for 320 iterations, with a batch size of 128, and a learning rate of 0.05.

\end{document}